\newcommand{\Eqref}[1]{Equation~\ref{#1}}
\pgfplotsset{compat=1.17}
\definecolor{myRed}{HTML}{D81B60}
\definecolor{myBlue}{HTML}{1E88E5}
\definecolor{myYellow}{HTML}{FFC107}
\definecolor{myGreen}{HTML}{007B67}
\definecolor{emerald}{rgb}{0.31, 0.78, 0.47}
\newtcbox{\myGreenHighlight}[1][]{nobeforeafter, tcbox raise base, boxrule=0pt, top=0pt, bottom=0pt, left=0mm, right=0mm, arc=0pt, auto outer arc, boxsep=0pt, leftright skip=0pt, colback=myGreen!20, colframe=myGreen!20, #1}
\newtcbox{\myBlueHighlight}[1][]{nobeforeafter, tcbox raise base, boxrule=0pt, top=0pt, bottom=0pt, left=0mm, right=0mm, arc=0pt, auto outer arc, boxsep=0pt, leftright skip=0pt, colback=myBlue!20, colframe=myBlue!20, #1}
\newtcbox{\myYellowHighlight}[1][]{nobeforeafter, tcbox raise base, boxrule=0pt, top=0pt, bottom=0pt, left=0mm, right=0mm, arc=0pt, auto outer arc, boxsep=0pt, leftright skip=0pt, colback=myYellow!20, colframe=myYellow!20, #1}
\newtcbox{\myRedHighlight}[1][]{nobeforeafter, tcbox raise base, boxrule=0pt, top=0pt, bottom=0pt, left=0mm, right=0mm, arc=0pt, auto outer arc, boxsep=0pt, leftright skip=0pt, colback=myRed!20, colframe=myRed!20, #1}
\newcommand{\llama}{\textsc{Llama-\scalebox{0.8}{2} }}
\newcommand{\msphi}{Phi-2 }
\newcommand{\wikitext}{WikiText-2 }
\newtheorem{theorem}{Theorem}
\def\eqref#1{equation~\ref{#1}}
\def\Eqref#1{Equation~\ref{#1}}
\def\1{\bm{1}}
\def\vb{{\bm{b}}}
\def\vs{{\bm{s}}}
\def\vx{{\bm{x}}}
\def\mC{{\mathbf{C}}}
\def\mD{{\mathbf{D}}}
\def\mI{{\mathbf{I}}}
\def\mM{{\mathbf{M}}}
\def\mQ{{\mathbf{Q}}}
\def\mW{{\mathbf{W}}}
\def\mX{{\mathbf{X}}}
\def\mZ{{\mathbf{Z}}}
\DeclareMathAlphabet{\mathsfit}{\encodingdefault}{\sfdefault}{m}{sl}
\SetMathAlphabet{\mathsfit}{bold}{\encodingdefault}{\sfdefault}{bx}{n}
\let\oldparagraph\paragraph
\renewcommand{\paragraph}[1]{\vspace{-2.5mm}\oldparagraph{#1}}
\title{SliceGPT: Compress Large Language Models by Deleting Rows and Columns}
\author{
\hspace{-5pt}
Saleh Ashkboos$^\dagger$\thanks{Work completed as an intern at Microsoft.} \\
ETH Zurich\\
\texttt{saleh.ashkboos@inf.ethz.ch} 
\And
Maximilian L. Croci\thanks{Equal contribution.}\\
Microsoft Research\\
\texttt{t-mcroci@microsoft.com}\quad \quad \quad \quad
\AND
Marcelo Gennari do Nascimento\\
Microsoft\\
\texttt{marceloge@microsoft.com}
\And
Torsten Hoefler\\
ETH Zurich\\
\texttt{torsten.hoefler@inf.ethz.ch}
\And
James Hensman\\
Microsoft Research\\
\texttt{jameshensman@microsoft.com}
}
\begin{document}
\maketitle
\begin{abstract}
Large language models have become the cornerstone of natural language processing, but their use comes with substantial costs in terms of compute and memory resources. Sparsification provides a solution to alleviate these resource constraints, and recent works have shown that trained models can be sparsified post-hoc. Existing sparsification techniques face challenges as they need additional data structures and offer constrained speedup with current hardware. In this paper we present SliceGPT, a new post-training sparsification scheme which replaces each weight matrix with a smaller (dense) matrix, reducing the embedding dimension of the network. Through extensive experimentation we show that SliceGPT can remove up to 25\% of the model parameters (including embeddings) for \llama 70B, OPT 66B and \msphi models while maintaining 99\%, 99\% and 90\% zero-shot task performance of the dense model respectively. Our sliced models run on fewer GPUs and run faster without any additional code optimization: on 24GB consumer GPUs we reduce the total compute for inference on \llama 70B to 64\% of that of the dense model; on 40GB A100 GPUs we reduce it to 66\%. We offer a new insight, computational invariance in transformer networks, which enables SliceGPT and we hope it will inspire and enable future avenues to reduce memory and computation demands for pre-trained models. Code is available at: \url{https://github.com/microsoft/TransformerCompression} .

\end{abstract}

\section{Introduction}
\label{sec:intro}

Large language models (LLMs) are neural networks with billions of parameters, trained on trillions of tokens \citep{zhao2023survey}. The cost of training an LLM has caused a shift to re-using pre-trained models for multiple tasks, the \emph{foundation model} paradigm. 
The size of LLMs makes deploying a pre-trained model an expensive undertaking. Many models require multiple GPUs to be able to compute a prediction, and because the models are autoregressive, multiple forward passes of the neural network are needed to generate text responses. It is therefore of widespread interest to reduce the computational requirements of these models, usually performed via post-training techniques referred to as \emph{model compression}.

A majority of model compression techniques fall into one of four categories: distillation, tensor decomposition (which includes low-rank factorization), pruning and quantization~\citep{hoefler2021survey, gholami2021survey,zhu2023survey, gupta2021compression}. In this work we focus on pruning, though we hope that our methodology may influence future work on other areas. Whilst pruning methods have been around for some time, many approaches require recovery fine-tuning (RFT) after pruning to maintain performance, making the overall process an expensive and hard-to-scale task. With SliceGPT we compress large models using a single GPU in just a few hours and maintain competitive performance on generation and downstream tasks even without RFT.  

Pruning methods work by setting some elements of the weight matrices in an LLM to zero, and (optionally) updating the surrounding elements of the matrix to compensate. The result is a sparse pattern which means that some floating point operations can be skipped in the matrix multiplications required in the forward pass of the neural network. The relative speedup of the operations depends on the level of sparsity and the sparsity pattern: more structured sparsity is associated with more computational gain. In contrast to other pruning methods, SliceGPT prunes away (slices off!) entire rows or columns of the weight matrices. Before slicing, we perform a single transformation of the network which leaves the predictions invariant, but allows the slicing to have only a small effect.

The result is that weight matrices are smaller, and the signals passed between blocks of the neural network are smaller too: we reduce the \emph{embedding dimension} of the neural network.

Figure \ref{fig:sparse_types} compares our approach with existing sparsity methods. 
Our contributions are as follows:
\begin{enumerate}
    \item We introduce the idea of \emph{computational invariance}: we show that we can apply orthogonal-matrix transformations to each weight matrix in a transformer without changing the model. 
    \item We use this to edit each block in a transformer architecture, such that we are projecting the signal matrix\footnote{The signal matrix is sometimes referred as activation matrix.} between blocks onto its own principal components. We remove columns or rows of the transformed weight matrices to reduce the model size. We call the transformation and removal of weights SliceGPT. 
    \item We conduct multiple experiments on OPT~\citep{zhang2022opt} and \llama~\citep{touvron2023llama} LLMs, demonstrating that SliceGPT is able to compress these models by up to 30\% with superior perplexity to the state of the art 2:4 scheme. On downstream tasks we additionally experiment with \msphi and show that all models can be sliced by up to 30\% while maintaining >90\% of the dense performance.  
\end{enumerate}

\begin{figure}
    \begin{center}
        \begin{tikzpicture}
\newcommand{\cellsize}{0.22cm}
\newlength{\innersep}
\setlength{\innersep}{0.0pt}
%
%
  \matrix[matrix of nodes, nodes in empty cells,
          nodes={draw, minimum size=\cellsize, anchor=center},
          column sep=-\pgflinewidth, row sep=-\pgflinewidth,
          execute at empty cell={%
              \node[fill=gray!20] {};
          },
          ] (x) {
    & & & & & & & \\
    & & & & & & & \\
    & & & & & & & \\
    & & & & & & & \\
    & & & & & & & \\
    & & & & & & & \\
    & & & & & & & \\
    & & & & & & & \\
    & & & & & & & \\
    & & & & & & & \\ 
  };
  \node[font=\large, circle, fill=white, fill opacity=0.5, minimum size=7mm] (mX) at (x.center) { };
  \node[font=\large, anchor=center] at (x.center) {$\mX$};
  %
  \matrix[right=0mm of x.north east, matrix of nodes, nodes in empty cells, anchor=north west,
          nodes={draw, minimum size=\cellsize, anchor=center},
          column sep=-\pgflinewidth, row sep=-\pgflinewidth,
          execute at empty cell={%
            \pgfmathparse{rnd}
            \ifdim\pgfmathresult pt<0.5pt
              \node[fill=white] {};
            \else
              \node[fill=myBlue!20] {};
            \fi
          },
          ] (wrand) {
    & & & & & & & \\
    & & & & & & & \\
    & & & & & & & \\
    & & & & & & & \\
    & & & & & & & \\
    & & & & & & & \\
    & & & & & & & \\
        & & & & & & & \\
  };
    \node[font=\large, circle, fill=white, fill opacity=0.5, minimum size=7mm] at (wrand.center) { };
  \node[font=\large] at (wrand.center) {$\mW$};
  \node[anchor=south] at (wrand.north west) {Unstructured sparsity};
  %
    \matrix[right= 2.6cm of x, matrix of nodes, nodes in empty cells,
          nodes={draw, minimum size=\cellsize, anchor=center},
          column sep=-\pgflinewidth, row sep=-\pgflinewidth,
          execute at empty cell={%
              \node[fill=gray!20] {};
          },
          ] (x2) {
    & & & & & & & \\
    & & & & & & & \\
    & & & & & & & \\
    & & & & & & & \\
    & & & & & & & \\
    & & & & & & & \\
    & & & & & & & \\
    & & & & & & & \\
    & & & & & & & \\
    & & & & & & & \\ 
  };
      \node[font=\large, circle, fill=white, fill opacity=0.5, minimum size=7mm] at (x2.center) { };
  \node[font=\large, anchor=center] at (x2.center) {$\mX$};
  %
  %
    \matrix[right=0mm of x2.north east, anchor=north west, matrix of nodes, nodes in empty cells,
          nodes={draw, minimum size=\cellsize, anchor=center},
          column sep=-\pgflinewidth, row sep=-\pgflinewidth,
          ] (m) {
    & & & & & & & \\
    & & & & & & & \\
    & & & & & & & \\
    & & & & & & & \\
    & & & & & & & \\
    & & & & & & & \\
    & & & & & & & \\
    & & & & & & & \\
  };

\foreach \row in {1, 2, 3, 4, 5, 6, 7, 8} {
  \pgfmathsetmacro{\first}{random(1, 4)}
  \pgfmathsetmacro{\second}{random(1, 4)}
  \whileboolexpr{test {\ifnumequal{\first}{\second}}}{
    \pgfmathsetmacro{\second}{random(1, 4)}
  }
    \pgfmathsetmacro{\third}{random(5, 8)}
  \pgfmathsetmacro{\fourth}{random(5, 8)}
  \whileboolexpr{test {\ifnumequal{\third}{\fourth}}}{
    \pgfmathsetmacro{\fourth}{random(5, 8)}
  }
  \node[draw, fill=myBlue!20] at (m-\row-\first.center) {};
  \node[draw, fill=myBlue!20] at (m-\row-\second.center) {};
    \node[draw, fill=myBlue!20] at (m-\row-\third.center) {};
  \node[draw, fill=myBlue!20] at (m-\row-\fourth.center) {};
    }
    
  \foreach \row in {1, 2, 3, 4, 5, 6, 7, 8} {
    \draw[thick] ($(m-\row-1.north west)$) rectangle ($(m-\row-4.south east)$);
    \draw[thick] ($(m-\row-5.north west)$) rectangle ($(m-\row-8.south east)$);
  }
    \node[font=\large, circle, fill=white, fill opacity=0.5, minimum size=7mm] at (m.center) { };
    \node[font=\large] at (m.center) {$\mW$};
    \node[anchor=south] at (m.north west) {2:4 Structured sparsity};


      \matrix[right= 2.6cm of x2, matrix of nodes, nodes in empty cells,
          nodes={minimum size=\cellsize, anchor=center},
          column sep=-\pgflinewidth, row sep=-\pgflinewidth,
          execute at empty cell={%
              \ifthenelse{\pgfmatrixcurrentcolumn<6}{
                \node[draw, fill=gray!20] {};
              }{
                \node[] {};
              }
          },
          ] (x3) {
    & & & & & & & \\
    & & & & & & & \\
    & & & & & & & \\
    & & & & & & & \\
    & & & & & & & \\
    & & & & & & & \\
    & & & & & & & \\
    & & & & & & & \\
    & & & & & & & \\
    & & & & & & & \\
  };
    \begin{scope}[on background layer]
    \fill[pattern=north east lines] (x3-1-6.north west) rectangle (x3-10-8.south east);
  \end{scope}]
    \node[font=\large, circle, fill=white, fill opacity=0.5, minimum size=10mm] at (x3.center) { };
  \node[font=\large] at (x3.center) {$\mX\mQ$};
  %
  %
  \matrix[right=0mm of x3.north east, anchor=north west, matrix of nodes, nodes in empty cells,
          nodes={minimum size=\cellsize, anchor=center},
          column sep=-\pgflinewidth, row sep=-\pgflinewidth,
          execute at empty cell={%
              \ifthenelse{\pgfmatrixcurrentrow<6}{
                \node[draw, fill=myBlue!20] {};
              }{
                \node[] {};
              }
          },
          ] (w3) {
    & & & & & & & \\
    & & & & & & & \\
    & & & & & & & \\
    & & & & & & & \\
    & & & & & & & \\
    & & & & & & & \\
    & & & & & & & \\
    & & & & & & & \\
  };
    \begin{scope}[on background layer]
    \fill[pattern=north east lines] (w3-5-1.north west) rectangle (w3-8-8.south east);
  \end{scope}]
      \node[font=\large, circle, fill=white, fill opacity=0.5, minimum size=10mm] at (w3.center) { };
  \node[font=\large]at (w3.center) {$\mQ^{\!\boldsymbol\top}\!\mW$};
  \node[anchor=south] at (w3.north west) {Slicing (ours)};
\end{tikzpicture}
    \end{center}
    \caption{\label{fig:sparse_types} Matrix multiplication of the signal $\mX$ and a weight matrix $\mW$ under different types of sparsity. \textbf{Left}: unstructured sparsity, where some elements of $\mW$ are zero, and $\mX$ is dense. \textbf{Middle}: 2:4 structured sparsity, where each block of four weight matrix entries contains two zeros, and $\mX$ is dense. \textbf{Right}: SliceGPT, where after introducing transformation $\mQ$, all the sparsity is arranged to the bottom rows of $\mW$ and the corresponding columns of $\mX$ are removed. }
\end{figure}
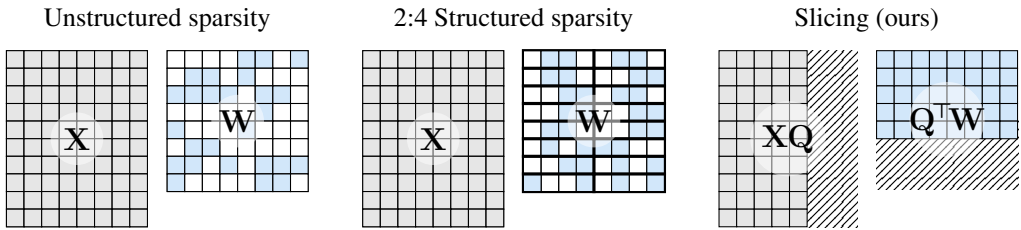

\section{Background}
\label{sec:background}
In this section, we first describe some necessary background on transformer architectures, which allows us to introduce notation which we will use to prove our main results. Then we describe related work on sparsification for compressing such architectures.

\subsection{Transformer Networks}
Transformer networks \citep{vaswani2017attention} are a class of neural networks that have been shown to be effective at a wide range of tasks including language modeling.  The transformer architecture is composed of a series of layers, each of which is composed of a multi-head self-attention block followed by a feed-forward network block. Between each block, there is a LayerNorm \citep{ba2016layer} (or RMSNorm \citep{zhang2019root}) block. Figure \ref{fig:transformer} illustrates part of a transformer network: an attention block connected to a Feed Forward Network (FFN) block through a LayerNorm block, with residual connections. The following describes the operations of each component (ignoring dropout, which is not applied post-training). 
\paragraph{Embeddings} Let $D$ be the embedding dimension of our transformer, $N$ be the sequence length. The transformer model takes as input a sequence of token IDs and position IDs, and uses them to index the embedding matrices, producing the initial signal $\mX$ with shape $N\times D$. In what follows we consider, without loss of generality, a single embedding matrix $\mW_\textrm{embd}$ indexed by input sequence $\vs$. 
\paragraph{LayerNorm} After embeddings, the signal matrix is passed through a LayerNorm operation, which subtracts the mean from each row of the matrix, divides the row by its standard deviation, rescales (columnwise), and adds an offset. We write the LayerNorm block as
\begin{equation}
\textrm{LayerNorm}(\mX) = \textrm{RMSNorm}(\mX \mM)\textrm{diag}(\boldsymbol\alpha)\sqrt{D} + \mathbf{1}_N\boldsymbol\beta^\top
\end{equation}
where $\textrm{RMSNorm}(\mX)$ applies\footnote{In some implementations an RMSNorm block may contain scale parameters. We consider these to be special instances of LayerNorm and handle them accordingly.} $\vx\leftarrow\vx/\Vert\vx\Vert$ to each row of $\mX$.
The vector parameter $\boldsymbol\alpha$ and offset (vector) parameter $\boldsymbol\beta$ are learned independently at each LayerNorm instance. The constant matrix $\mM=\mI - \frac{1}{D}\mathbf{1}\mathbf{1}^\top$ is a $D\times D$ matrix which subtracts the mean from each row of $\mX$.
%
%
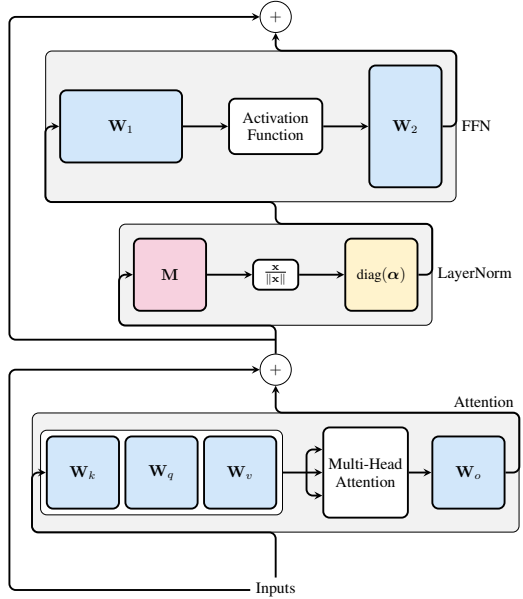
\begin{wrapfigure}{!h}{0.49\textwidth}
\begin{center}
\scalebox{0.6}{
\begin{tikzpicture}[
module/.style={draw, very thick, rounded corners, minimum width=15ex},
linear/.style={module, fill=myBlue!20, minimum width=1.6cm, minimum height=1.6cm},
nonlinearity/.style={module, minimum width=1cm, fill=white},
block/.style={module, fill=black!5, thin},
arrow/.style={-stealth, very thick, rounded corners},
]
\node[linear](wk) {$\mathbf{W}_k$};
\node[right=1mm of wk, linear](wq) {$\mathbf{W}_q$};
\node[right=1mm of wq, linear](wv) {$\mathbf{W}_v$};
\node[right=10mm of wv, nonlinearity, minimum height=2cm, align=center](mha) {Multi-Head\\Attention};
\node[right=5mm of mha, linear](out) {$\mathbf{W}_o$};
\begin{scope}[on background layer]
\node[fit={(wk)(wq)(wv)(mha)(out)}, block, inner sep=3mm] (attnmod) {};
\node[anchor=south east] at (attnmod.north east) {Attention};
\node[fit={(wk)(wq)(wv)}, linear, fill=white, thin] (linin) {};
\end{scope}

\coordinate (mhafork) at ($(mha.west)!0.4!(linin.east)$);
\draw[arrow] (linin) -- (mha);
\draw[arrow] (mhafork) |- ($(mha.west)!0.5!(mha.south west)$);
\draw[arrow] (mhafork) |- ($(mha.west)!0.5!(mha.north west)$);
\draw[arrow] (mha) -- (out);
\node[below=of attnmod] (inputs) {Inputs};
\node[draw, circle, above=0.6cm of attnmod] (resid1) {$+$};

\node[above=1.4cm of resid1, nonlinearity](rmsnorm) {$\frac{\mathbf x}{\Vert \mathbf x\Vert}$};
\node[left=of rmsnorm, linear, fill=myRed!20](mean) {$\mM$};
\node[right=of rmsnorm, linear, fill=myYellow!20](scale) {$\textrm{\footnotesize diag}(\boldsymbol{\alpha})$};
\begin{scope}[on background layer]
\node[fit={(mean)(rmsnorm)(scale)}, block, label=right:LayerNorm, inner sep=3mm] (layernorm){};
\end{scope}
\draw[arrow] (mean) -- (rmsnorm);
\draw[arrow] (rmsnorm) -- (scale);

\node[above=2.3cm of rmsnorm, nonlinearity, align=center, inner sep=3mm] (relu) {Activation\\ Function};
\node[left=of relu, linear, minimum width=2.7cm](w1) {$\mathbf{W}_1$};
\node[right=of relu, linear, minimum height=2.7cm] (w2) {$\mathbf{W}_2$};
\begin{scope}[on background layer]
\node[fit={(w1)(relu)(w2)}, block, inner sep=3mm, label=right:FFN] (mlp) {};
\end{scope}
\draw[arrow] (w1) -- (relu);
\draw[arrow] (relu) -- (w2);
\coordinate (resid2loc) at ($(resid1.north |- mlp.north) + (0, 0.75cm)$);
\node[draw, circle] (resid2) at (resid2loc) {$+$};

\draw[arrow] (inputs) -- (attnmod.south) -| (attnmod.west) -- (wk.west);
\coordinate[left=5mm of attnmod] (attnleft);
\draw[arrow] (inputs) -| (attnleft) |- (resid1);
\draw[arrow] (out) -- (attnmod.east) |- (attnmod.north) -- (resid1);
\draw[arrow] (resid1) -- (layernorm.south) -| (layernorm.west) -- (mean);
\draw[arrow] (scale) -- (layernorm.east) |- (layernorm.north) -- ($(layernorm.north |- mlp.south)$) -| (mlp.west) -- (w1);
\coordinate (mlpleft) at ($(attnleft |- mlp.east)$);
\coordinate (mlpresid) at ($(layernorm.south)!0.5!(resid1.north)$);
\draw[arrow] (mlpresid) -| (mlpleft) |- (resid2);
\draw[arrow] (w2) -- (mlp.east) |- ($(resid2.south |- mlp.north)$) -- (resid2);
\end{tikzpicture}
}
\end{center}
\caption{\label{fig:transformer} A single layer in a transformer network. The signals (inputs) arising from the previous blocks of the networks arrive at the bottom of the figure, before being passed through attention, LayerNorm, and FFN. The attention and FFN blocks both have input and output linear operations (\myBlueHighlight{blue}) which we denote in the text as $\mW_\textrm{in}, \mW_\textrm{out}$. The linear operations of LayerNorm \myRedHighlight{$\mM$} and \myYellowHighlight{$\textrm{diag}(\boldsymbol{\alpha})$} are highlighted. This and subsequent figures do not show biases. }
\end{wrapfigure}
\paragraph{Attention Blocks} The attention block has four matrices: $\mW_k, \mW_q, \mW_v$ and $\mW_o$, each of dimension $D\times D$.  
The input signal arriving into the block is projected into the Key ($\mX\mW_k$), Query ($\mX\mW_q$), and Value ($\mX\mW_v$) matrices, which are then split into multiple \textit{heads}. A nonlinear operation is applied at each head before the signals are combined and multiplied by the output weight matrix $\mW_o$. Since the first three weight matrices are applied separately to the inputs, we can concatenate them and perform a single matrix multiplication (denoted by the white box around these matrices in Figure \ref{fig:transformer}).  We can consider the concatenation of these matrices to be a single linear layer, which we denote $\mW_{\textrm{in}}$. We also refer to the output matrix as $\mW_\textrm{out}$. We treat the attention block as $\sigma(\mX\mW_\textrm{in} + \vb_\textrm{in})\mW_\textrm{out}+\vb_\textrm{out}$\footnote{For ease of notation here and throughout this paper, we abuse notation slightly and omit the broadcasting of the bias terms across the sequence length dimension. The complete notation for the operation of an attention block is $\sigma(\mX\mW_\textrm{in} + \mathbf{1}_N\vb_\textrm{in}^\top)\mW_\textrm{out}+\mathbf{1}_N\vb_\textrm{out}^\top$ .}, where $\sigma$ represents the multi-head attention operation. 
\paragraph{FFN Blocks} The other type of block that appears in transformer architectures is a Feed Forward Network (FFN) block. In many cases, this is a Multi-layer Perceptron (MLP), which consists of a linear layer $\mW_1$, followed by an element-wise operation $\sigma$, followed by a second linear layer: $\sigma(\mX\mW_1 + \vb_1)\mW_2 + \vb_2$. Some architectures have adopted the gated format, where an additional matrix is used, and the operation is $\big(\sigma(\mX\mW_1 + \vb_1) \circ (\mX\mW_2)\big)\mW_3$, where $\circ$ is an element-wise product. Much like the first three linear layers in the attention module, we can consider the concatenation of $\mW_1$ and $\mW_2$ to be a single linear operation, and denote it $\mW_\textrm{in}$. We can therefore denote the operation of MLP or gated FFN layers as $\sigma(\mX\mW_\textrm{in})\mW_\textrm{out}$, where $\sigma$ takes a different meaning to that in an attention. 
\paragraph{Language Modelling (LM) Head} All of the transformer networks to which we apply SliceGPT in this paper have a decoder-only structure following \citep{gpt1}: after multiple layers applying alternating attention and FFN blocks, a head block computes logits which are used to compute the loss during training and token prediction on deployment. The head operation is $\mX\mW_\textrm{head} + \vb_\textrm{head}$, where $\mX$ is the output of the last transformer block. 
\paragraph{Forward pass} Once the model is trained and all of the parameters are set, the computations required in a transformer network to produce predictions involve passing signal matrices from one block to the next until the head node is reached. Since we are able to define both FFN and attention blocks in the form $\sigma(\mX\mW_\textrm{in} + \vb_\textrm{in})\mW_\textrm{out} + \vb_\textrm{out}$, where we understand that $\sigma$ represents either a point-wise or multi-head-attention nonlinearity, we are able to describe the forward pass using Algorithm \ref{alg:forward}.

\begin{algorithm}
\small
  \caption{The forward pass of a transformer network}\label{alg:forward}
  \begin{algorithmic}[1]
    \REQUIRE $\{\mW_\textrm{in}^\ell, \vb_\textrm{in}^\ell, \mW_\textrm{out}^\ell\, \vb_\textrm{out}^\ell\}_{\ell=1}^L$ \hfill\COMMENT{weights and biases of FFN and attention blocks}
    \REQUIRE $\{\sigma_\ell\}_{\ell=1}^L$\hfill \COMMENT{nonlinearity associated with each block}
    \REQUIRE $\{\textrm{Norm}_\ell\}_{\ell=0}^L$\hfill\COMMENT{LayerNorm or RMSNorm instances to perform between blocks}
    \REQUIRE $\mW_\textrm{embd}, \mW_\textrm{head}, \vb_\textrm{head}$ \hfill\COMMENT{embedding and head matrices}
    \REQUIRE $\vs$ \hfill\COMMENT{input sequence}
    \vspace{2mm}
    \STATE $\mX \leftarrow \mW_\textrm{embd}[\vs, :]$ \hfill\COMMENT{index embeddings}
    \STATE $\mX \leftarrow \textrm{Norm}_0(\mX) $\hfill\COMMENT{normalize}
    \FOR{$\ell = 1\ldots L$}
      \STATE $\mZ \leftarrow \sigma_\ell\big(\mX\mW_\textrm{in}^\ell + \vb_\textrm{in}^\ell\big)\mW_\textrm{out}^\ell + \vb_\textrm{out}^{\ell}$\hfill\COMMENT{apply FFN or attention}
      \STATE $\mX \leftarrow \textrm{Norm}_\ell(\mX + \mZ) $\hfill\COMMENT{ normalize and apply residual connection}
    \ENDFOR
    \RETURN $\mX \mW_\textrm{head} + \vb_\textrm{head}$\hfill\COMMENT{apply model head}
  \end{algorithmic}
\end{algorithm}

\subsection{Related work}
In the simplest setting, one can employ magnitude-based sparsification, which involves setting the smallest weights in the model to zero~\citep{han2016deep,zhu2017prune,gale2019state}. Although magnitude sparsification is scalable, its application to LLMs gives too strong a degradation in performance~\citep{frantar2023sparsegpt}. Optimal Brain Surgeon (OBS)~\citep{hassibi1993optimal,lecun1989optimal}, a more sophisticated method, systematically removes weights that have the least impact on the loss function. The method compensates for the error introduced by weight removal by updating the un-pruned weights using the inverse of the Hessian matrix. Unfortunately, OBS is impractical for models with a few million parameters due to the need to calculate and store the inverse of the Hessian matrix. 
To address the computational limitation posed by OBS, recent research has explored two approaches: approximating the inverse of the Hessian matrix such as WoodFisher~\citep{singh2020woodfisher} or applying it separately to each layer such as in Optimal Brain Compression \citep[OBC,][]{frantar2022optimal}, known as layer-wise pruning. While these techniques have proven effective for medium-sized networks, they are not practical for large language models, where individual layer weight matrices typically contain more than $10^8$ parameters.

GPTQ \citep{frantar2022gptq} has solved this issue by quantizing (representing the parameter using lower precision) the weight matrix of LLMs using a column-by-column scheme and updating all not-yet-quantized weights in the next columns. SparseGPT \citep{frantar2023sparsegpt} applied the same idea for pruning and sparsifies the LLMs using unstructured and semi-structured pruning, and \citet{sun2023simple} simplified the idea by using only the diagonal of the Hessian. Since achieving end-to-end speed improvements through unstructured pruning is a demanding task, they also attempted a similar technique to induce sparsity with semi-structured patterns  like 2:4 and 4:8 \citep{mishra2021accelerating}. However, implementing such structures does not maintain the accuracy of the model.

Another approach to compression is low-rank approximation, where each weight matrix is replaced with the product of two matrices with a smaller inner dimension, usually followed by a fine-tuning step ~\citep{hu2021lora, mahabadi2021compacter, ben-noach-goldberg-2020-compressing, tukan2020compressed}. To achieve compression, the inner dimension must be smaller than half of the original dimension. In contrast, our method replaces each weight matrix with a single smaller one, reducing the embedding dimension without the need for fine-tuning.

We propose to delete rows and columns of weight matrices, which is similar to pruning of filters and channels in the convnet literature.  There, sparsity-inducing regularization is added to batch-norm factors \citep{liu2017learning} or network structures \citep{huang2018data}, and the network is trained or fine-tuned, resulting in the pruning of channels or parts of the network. Perhaps the most analogous methods to ours are ThiNet \citep{luo2017thinet,he2017channel}, which apply linear operations between layers (as will we), interleaved with more fine-tuning with regularization. In this literature, the model sizes are typically several orders of magnitude smaller than in LLMs, for example the VGG16 network has 138M parameters, comparable with the very smallest OPT model that we consider. The huge size of LLMs makes methods that involve extensive fine-tuning unappealing, especially when outer-loops are needed to select regularization parameters. 
 
Recently, some works have been proposed that apply structured pruning to LLMs, followed by continued training (or fine-tuning) to recover the performance that is lost. For example LLM-pruner \citep{llm-pruner} removes connected structures from an LLM before further training. Contemporarily with our work, LLM Surgeon~\citep{llm-surgeon} interweaves recovery fine-tuning with pruning. We provide results for SliceGPT as a single-shot method and with post-slicing recovery fine-tuning.

\section{SliceGPT}
\label{sec:sliceGPT}
Our SliceGPT method relies on a computational invariance that is inherent in the transformer architecture. By this, we mean that it is possible to apply an orthogonal transformation to the output of one component, so long as it is undone in the next. Our key insight is that the RMSNorm operation which is performed between blocks of the network does not affect the transformation: the operations commute. In this section, we first describe how the invariance occurs in RMSNorm-connected transformer networks, then we note how networks trained with LayerNorm connections can be converted to RMSNorm. Next, we describe our method to compute transformations at each layer using Principal Component Analysis (PCA), such that the signal between blocks is projected onto its principal components. Finally, we describe how deleting the minor principal components corresponds to slicing away rows or columns of the modified network. 

\subsection{Computational invariance in transformer networks}
Let $\mQ$ denote an orthogonal matrix: we have $\mQ^\top\mQ = \mQ\mQ^\top = \mI$. Note that multiplying a vector $\vx$ by $\mQ$ does not change the norm of the vector, since $\Vert\mQ\vx\Vert = \sqrt{\vx^\top\mQ^\top \mQ\vx} = \sqrt{\vx^\top\vx} = \Vert\vx\Vert$. In this work, the dimensions of $\mQ$ will always match the embedding dimension of the transformer $D$. 

Suppose that $\mX_\ell$ is the output of one block of the transformer, which is then processed by RMSNorm, and then inputted to the subsequent block as $\textrm{RMSNorm}(\mX_\ell)$. If we insert linear layers with the orthogonal matrix $\mQ$ before RMSNorm and $\mQ^\top$ after RMSNorm,  the network remains unchanged, since each row of the signal matrix is multiplied by $\mQ$, normalized and multiplied by $\mQ^\top$. 
We have 
\begin{equation}
\textrm{RMSNorm}(\mX_\ell\mQ)\mQ^\top = \textrm{RMSNorm}(\mX_\ell)\label{eq:rmsnorm_inv}\,.
\end{equation}
A proof of this relation appears in Appendix~\ref{appx:proof}. Now, since each attention or FFN block of the network has a linear operation on both the input and output, we can absorb the additional operations $\mQ$ into the linear layers of the blocks. Since the network contains residual connections, we must also apply $\mQ$ to the output of all previous layers (all the way back to the embedding) and to all subsequent layers (all the way up to the LM Head). 

An \emph{invariant} function is one for which a transformation to the input does not result in a change to the output. In our case, we can apply any orthogonal transformation $\mQ$ to the weights of the transformer without changing the result, so the \emph{computation} can be performed in any transformed state. We refer to this as a \emph{computational invariance}, and define it in the following theorem. 
\begin{theorem}\label{theorem:computation_equ_tformer}
    Let $\mW_{\textrm{in}}^\ell$ and $\mW_{\textrm{out}}^\ell $ be the weight matrices of the linear layers of the $\ell$-th block of an RMSNorm-connected transformer network, and $\vb_{\textrm{in}}^\ell, \vb_\textrm{out}^\ell$ be the corresponding biases, if any, and let $\mW_\textrm{embd}$ and $\mW_\textrm{head}$ be the embedding and head matrices. Let $\mQ$ be an orthogonal matrix of dimension $D$. Then the following network is equivalent to the original transformer network:
   \begin{minipage}[t]{0.5\textwidth}
    \begin{align}
        \tilde\mW_\textrm{embd} &= \mW_\textrm{embd}\mQ\label{eq:rot_embd}\,,\\
        \tilde\mW_\textrm{in}^\ell &= \mQ^\top\mW_\textrm{in}^\ell\label{eq:rot_in}\,,\\
        \tilde\mW_\textrm{out}^\ell &= \mW_\textrm{out}^\ell\mQ\label{eq:rot_out}\,,
    \end{align}
\end{minipage}%
\begin{minipage}[t]{0.5\textwidth}
    \begin{align}
        \tilde\vb_\textrm{out}^\ell &= \mQ^\top\vb_\textrm{out}^\ell\label{eq:rot_bias}\, ,\\
        \tilde\mW_\textrm{head} &= \mQ^\top\mW_\textrm{head}\label{eq:rot_head}\,.
    \end{align}
\end{minipage}

The input and head biases are copied: $\tilde\vb_\textrm{in}^\ell = \vb_\textrm{in}^\ell$,  $\tilde\vb_\textrm{head} = \vb_\textrm{head}$. 
\label{thm:1}
\end{theorem}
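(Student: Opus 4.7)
The plan is to argue by induction along Algorithm~\ref{alg:forward}, carrying through a single invariant: at every intermediate point in the transformed network, the signal equals the original signal post-multiplied by $\mQ$. Writing $\mX$ for the running signal of the original network and $\tilde\mX$ for its counterpart in the network built from the tilded weights, I would show that $\tilde\mX = \mX\mQ$ holds after each of lines 1, 2, 4, and 5 of Algorithm~\ref{alg:forward}, and that the trailing $\mQ$ is absorbed at line~6 by $\tilde\mW_\textrm{head} = \mQ^\top\mW_\textrm{head}$ so that the two networks return identical logits.

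The base case is immediate: row-indexing $\tilde\mW_\textrm{embd} = \mW_\textrm{embd}\mQ$ by $\vs$ yields $\tilde\mX = \mX\mQ$ after line~1, and Equation~\ref{eq:rmsnorm_inv} propagates this through the initial $\textrm{Norm}_0$. For the inductive step, I would assume $\tilde\mX = \mX\mQ$ at the top of the loop and first compute
\[
\tilde\mX\tilde\mW_\textrm{in}^\ell + \vb_\textrm{in}^\ell = \mX\mQ\mQ^\top\mW_\textrm{in}^\ell + \vb_\textrm{in}^\ell = \mX\mW_\textrm{in}^\ell + \vb_\textrm{in}^\ell,
\]
since $\mQ\mQ^\top = \mI$ and $\vb_\textrm{in}^\ell$ is copied unchanged. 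Consequently the pre-activation fed into $\sigma_\ell$ (whether an element-wise nonlinearity or the multi-head attention operation, each of which depends on its input solely through this pre-activation) is identical to the original one, and $\sigma_\ell$ produces the same output. Post-multiplying by $\tilde\mW_\textrm{out}^\ell = \mW_\textrm{out}^\ell\mQ$ and adding the broadcast of $\tilde\vb_\textrm{out}^\ell = \mQ^\top\vb_\textrm{out}^\ell$ (so that $\mathbf{1}_N\tilde\vb_\textrm{out}^{\ell\top} = \mathbf{1}_N\vb_\textrm{out}^{\ell\top}\mQ$) then yields $\tilde\mZ = \mZ\mQ$; the residual sum preserves the rotation as $\tilde\mX + \tilde\mZ = (\mX + \mZ)\mQ$; and one more application of Equation~\ref{eq:rmsnorm_inv} through $\textrm{Norm}_\ell$ restores the invariant. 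After the last iteration, line~6 evaluates to $\tilde\mX\tilde\mW_\textrm{head} + \vb_\textrm{head} = \mX\mQ\mQ^\top\mW_\textrm{head} + \vb_\textrm{head} = \mX\mW_\textrm{head} + \vb_\textrm{head}$, which is the original network's output.

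The main obstacle, and really the only substantive fact used, is the RMSNorm equivariance $\textrm{RMSNorm}(\mX\mQ)\mQ^\top = \textrm{RMSNorm}(\mX)$ recorded as Equation~\ref{eq:rmsnorm_inv}. It reduces to the observation that $\Vert\vx\mQ\Vert = \Vert\vx\Vert$ for any row $\vx$ and orthogonal $\mQ$, so the per-row normalization constant is unchanged and the rotation commutes through. This is also precisely where the restriction to RMSNorm-connected networks is essential: the mean-subtraction matrix $\mM$ appearing in the general LayerNorm does not commute with an arbitrary $\mQ$, so a LayerNorm-connected model would first need $\mM$ and $\textrm{diag}(\boldsymbol\alpha)$ folded into the adjacent weight matrices before the same induction could be applied. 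Beyond that, the only piece of bookkeeping to watch is the broadcasting convention for the output biases, which is what forces the specific form $\tilde\vb_\textrm{out}^\ell = \mQ^\top\vb_\textrm{out}^\ell$ rather than leaving $\vb_\textrm{out}^\ell$ unchanged.
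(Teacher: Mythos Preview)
Your proposal is correct and follows essentially the same approach as the paper: step through Algorithm~\ref{alg:forward}, maintain the invariant $\tilde\mX = \mX\mQ$ after each line using \Eqref{eq:rmsnorm_inv} for the normalizations, and cancel $\mQ\mQ^\top$ at the input linear layers and at the head. Your treatment is in fact slightly more explicit than the paper's, in particular about the bias broadcasting that forces the form $\tilde\vb_\textrm{out}^\ell = \mQ^\top\vb_\textrm{out}^\ell$; the only cosmetic slip is that the return statement is line~7, not line~6.
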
%
\begin{proof}
We can show that the transformed network computes the same results as the original by stepping through Algorithm \ref{alg:forward}. Suppose that on line 1, the original network has computed $\mX$, then the modified network has computed $\tilde \mX = \mX\mQ$, using \Eqref{eq:rot_embd}. Applying RMSNorm on line 2, we see that the operation of the two networks matches: by \Eqref{eq:rmsnorm_inv} we have $\textrm{RMSNorm}(\tilde \mX) = \textrm{RMSNorm}(\mX\mQ)=\textrm{RMSNorm}(\mX)\mQ$. 
Applying the nonlinearity on line 4, we see that $\tilde\mX\tilde\mW_\textrm{in}^\ell = \mX\mW_\textrm{in}^\ell$, 
using \Eqref{eq:rot_in} and it follows that $\tilde\mZ = \mZ\mQ$.
On line 5 the residual connection means we have $(\tilde\mX + \tilde\mZ) = (\mX + \mZ)\mQ$, 
and applying RMSNorm results in assignment of $\tilde \mX = \mX\mQ$.  This follows through to the end of the loop. Finally, on line 7, the transformations are undone as $\mX\mW_\textrm{head}=\tilde\mX\tilde\mW_\textrm{head}$ using \Eqref{eq:rot_head}. 
\end{proof}

\subsection{LayerNorm Transformers can be converted to RMSNorm}
%
\begin{wrapfigure}[30]{!h}{0.49\textwidth}
\scalebox{0.6}{
\begin{tikzpicture}[module/.style={draw, very thick, rounded corners, minimum width=15ex},
linear/.style={module, fill=myBlue!20, minimum width=1.6cm, minimum height=1.6cm},
yellowblue/.style={module, minimum width=1.4cm, minimum height=1.4cm, shading=linear, left color=myYellow!20, right color=myBlue!20, shading angle=90},
nonlinearity/.style={module, minimum width=1cm, fill=white},
block/.style={module, fill=black!5, thin},
arrow/.style={-stealth, very thick, rounded corners},
]

\node[yellowblue](wk) {$(\boldsymbol\alpha')\mathbf{W}_k$};
\node[right=1mm of wk, yellowblue](wq) {$(\boldsymbol\alpha')\mathbf{W}_q$};
\node[right=1mm of wq, yellowblue](wv) {$(\boldsymbol\alpha')\mathbf{W}_v$};
\node[right=of wv, nonlinearity, minimum height=2cm, align=center](mha) {Multi-Head\\Attention};
\node[right=of mha, linear, shading=linear, left color=myBlue!20, right color=myRed!20, shading angle=90](out) {$\mathbf{W}_o\mathbf{M}$};
\begin{scope}[on background layer]
\node[fit={(wk)(wq)(wv)(mha)(out)}, block, inner sep=3mm] (attnmod) {};
\node[anchor=south east] at (attnmod.north east) {Attention};
\node[fit={(wk)(wq)(wv)}, linear, fill=white] (linin) {};
\end{scope}

\coordinate (mhafork) at ($(mha.west)!0.4!(linin.east)$);
\draw[arrow] (linin) -- (mha);
\draw[arrow] (mhafork) |- ($(mha.west)!0.5!(mha.south west)$);
\draw[arrow] (mhafork) |- ($(mha.west)!0.5!(mha.north west)$);
\draw[arrow] (mha) -- (out);
\node[below=of attnmod] (inputs) {Inputs};
\node[draw, circle, above=0.6cm of attnmod] (resid1) {$+$};

\node[above=0.7cm of resid1, nonlinearity](rmsnorm) {$\frac{\mathbf x}{\Vert \mathbf x\Vert}$};
\begin{scope}[on background layer]
\node[fit={(rmsnorm)}, block, label=right:RMSNorm] (layernorm){};
\end{scope}

\node[above=4.7cm of wq, yellowblue, minimum width=2.7cm] (w1) {$(\boldsymbol{\alpha})\mathbf{W}_1$};
\node[right=of w1, nonlinearity, align=center] (relu) {Activation\\ Function};
\node[right=of relu, linear, minimum height=2.7cm, shading=linear, left color=myBlue!20, right color=myRed!20, shading angle=90] (w2) {$\mathbf{W}_2\mathbf{M}$};
\begin{scope}[on background layer]
\node[fit={(w1)(relu)(w2)}, block, label=right:FFN, inner sep=3mm] (mlp) {};
\end{scope}
\draw[arrow] (w1) -- (relu);
\draw[arrow] (relu) -- (w2);
\coordinate (resid2loc) at ($(resid1.north |- mlp.north) + (0, 0.75cm)$);
\node[draw, circle] (resid2) at (resid2loc) {$+$};

\draw[arrow] (inputs) -- (attnmod.south) -| (attnmod.west) -- (wk.west);
\coordinate[left=5mm of attnmod] (attnleft);
\draw[arrow] (inputs) -| (attnleft) |- (resid1);
\draw[arrow] (out) -- (attnmod.east) |- (attnmod.north) -- (resid1);
\draw[arrow] (resid1) -- (layernorm.south) -| (layernorm.west) -- (rmsnorm);
\draw[arrow] (rmsnorm) -- (layernorm.east) |- (layernorm.north) -- ($(resid2.south |- mlp.south)$) -| (mlp.west) -- (w1);
\coordinate (mlpleft) at ($(attnleft |- mlp.east)$);
\coordinate (mlpresid) at ($(layernorm.south)!0.5!(resid1.north)$);
\draw[arrow] (mlpresid) -| (mlpleft) |- (resid2);
\draw[arrow] (w2) -- (mlp.east) |- ($(resid2.south|- mlp.north)$) -- (resid2);

\end{tikzpicture}
}
\caption{\label{fig:absorb-layernorm} Converting a transformer network from LayerNorm to RMSNorm: the scale matrix \myYellowHighlight{$\textrm{diag}(\boldsymbol{\alpha})$} is absorbed into the subsequent matrix \myBlueHighlight{$\mW_\textrm{in}$}. Figure shows the block in combined colors. We use $(\boldsymbol{\alpha})$ for brevity. The mean-subtraction matrix \myRedHighlight{$\mM$} is applied to each matrix \myBlueHighlight{$\mW_\textrm{out}$}. Layernorm becomes RMSNorm, up to a constant $\sqrt{D}$ (not shown). Here, the scaling \myYellowHighlight{$(\boldsymbol{\alpha'}$)} comes from the previous block. }
\end{wrapfigure}
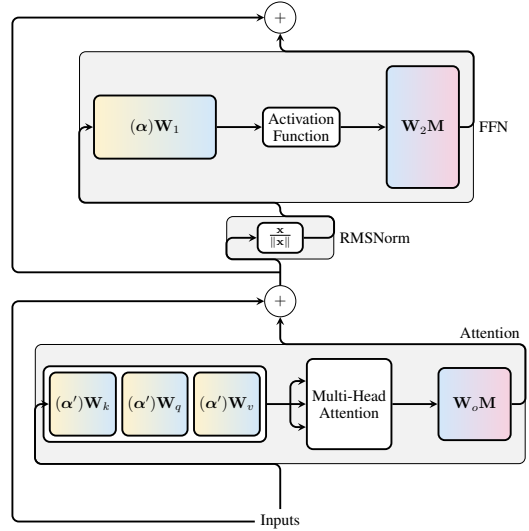
The computational invariance of the transformer network applies only to RMSNorm-connected networks. Before working on those with LayerNorm, we convert the network to RMSNorm by absorbing the linear blocks of LayerNorm into the adjacent blocks. Figure \ref{fig:absorb-layernorm} shows such a transformation on the transformer network (see Figure \ref{fig:transformer}) . In each block, we multiply the output matrix $\mW_\textrm{out}$ by the mean-subtraction matrix $\mM$, which accounts for the mean subtraction that would happen in the subsequent LayerNorm. The input matrices $\mW_\textrm{in}$ are pre-multiplied by the scales of the preceding LayerNorm blocks. The embedding matrix $\mW_\textrm{embd}$ must be mean-subtracted, and $\mW_\textrm{head}$ must be re-scaled by the last LayerNorm scales. This is a straightforward change in the order of operations and does not affect the network output.

\subsection{A transformation per block}
Now that every LayerNorm in the transformer has been converted to RMSNorm, we can select any $\mQ$ to modify the model. Our initial plan was to collect signals from the model, construct an orthogonal matrix using those signals and to delete parts of the network. We quickly saw that the signals at different blocks of the network were not aligned, and that we would need to apply a different orthogonal matrix at each block, $\mQ_\ell$. 

Allowing the orthogonal matrix used in each block to differ can be shown to leave the model unchanged using the same proof as Theorem~\ref{thm:1}, with the exception of line 5 of Algorithm \ref{alg:forward}. Here we see that the residual connection and the output of the block must have the same rotation. To fix this, we modify the residual connection by applying the linear transformation $\mQ_{\ell-1}^\top\mQ_\ell$ to the residual. Figure \ref{fig:sliced} shows how different rotations can be applied to different blocks with the additional linear operation in the residual connection. Unlike the modifications to the weight matrices, these additional operations cannot be pre-computed and add a small ($D\times D$) overhead to the model. Nonetheless, they are needed to allow slicing the model (Section \ref{subsec:slicing}) and we see real speedup overall (Section \ref{sec:experiments}).

To compute the matrices $\mQ_\ell$, we use PCA. We select a calibration dataset from the training set, run it through the model (after converting LayerNorm operations into RMSNorm), and extract the orthogonal matrix of the layer. We use the output of the transformed network to calculate the orthogonal matrices of the next layers. More precisely, if $\mX_{\ell, i}$ is the output of the $\ell^\textrm{th}$ RMSNorm block for the $i^\textrm{th}$ sequence in the calibration dataset, we compute
\begin{equation}
    \mC_\ell = \sum_i\mX_{\ell, i}^\top\mX_{\ell, i}
\end{equation}
and set $\mQ_\ell$ to the be the eigenvectors of $\mC_\ell$, sorted by decreasing eigenvalues. 
%

\subsection{Slicing}\label{subsec:slicing}
The goal of Principal Component Analysis is usually to take a data matrix $\mX$ and compute a lower dimensional representation $\mZ$, and an approximate reconstruction $\tilde\mX$:
\begin{equation}
    \mZ = \mX\mQ\mD\,,\qquad \tilde \mX = \mZ\mD^\top\mQ^\top\,.
\end{equation}
where $\mQ$ is the eigenvectors of $\mX^\top\mX$, and $\mD$ is a $D \times D_\textrm{small}$ deletion matrix (containing $D_\textrm{small}$ columns of the $D\times D$ identity matrix), which removes some of the columns of the matrix to the left. 
The reconstruction is $L_2$ optimal, in the sense that $\mQ\mD$ is a linear mapping that minimizes $\Vert\mX -\tilde\mX\Vert^2$. 

When we apply PCA to the signal matrix $\mX$ between blocks,  we never materialize the $N\times D$ signal matrix, but we apply the deletion matrix $\mD$ to the operations preceding and succeeding the construction of that matrix, which have already been multiplied by $\mQ$ in the above. We delete rows of $\mW_\textrm{in}$ and columns of $\mW_\textrm{out}$ and $\mW_\textrm{embd}$. We also delete both rows \emph{and} columns of the matrix $\mQ_{\ell-1}^\top\mQ_\ell$ that we have inserted into the residual connection (see Figure \ref{fig:sliced}). 

\begin{figure}
\begin{minipage}[c]{0.6\textwidth}
\scalebox{0.5}{%
\begin{tikzpicture}[module/.style={draw, very thick, rounded corners, minimum width=15ex},
linear/.style={module, fill=myBlue!20, minimum width=1.6cm, minimum height=1.6cm},
yellowblue/.style={linear, shading=linear, left color=myYellow!20, right color=myBlue!20, shading angle=90},
blueredgreen/.style = {linear, path picture={
          \shade[shading=axis, left color=myBlue!20, right color=myGreen!20, middle color=myRed!20, shading angle=90]
            (path picture bounding box.south west) rectangle (path picture bounding box.north east);
        }},
greenyellowblue/.style = {linear, path picture={
  \shade[shading=axis, left color=myGreen!20, right color=myBlue!20, middle color=myYellow!20, shading angle=90]
    (path picture bounding box.south west) rectangle (path picture bounding box.north east);
}},
nonlinearity/.style={module, minimum width=1cm, fill=white},
block/.style={module, fill=black!5, thin},
arrow/.style={-stealth, very thick, rounded corners},
]

\node[greenyellowblue, font=\footnotesize](wk) {$\mathbf{Q}_1^\top\!(\!\boldsymbol\alpha'\!)\mathbf{W}_{\!k}$};
\begin{scope}
\clip[rounded corners] (wk.south west) rectangle ($(wk.south east)!0.25!(wk.north east)$);
\fill[pattern=north east lines] (wk.south west) rectangle (wk.north east);
\end{scope}
\node[right=1mm of wk, greenyellowblue, font=\footnotesize](wq) {$\mathbf{Q}_1^\top\!(\!\boldsymbol\alpha'\!)\mathbf{W}_{\!q}$};
\begin{scope}
\clip[rounded corners] (wq.south west) rectangle ($(wq.south east)!0.25!(wq.north east)$);
\fill[pattern=north east lines] (wq.south west) rectangle (wq.north east);
\end{scope}
\node[right=1mm of wq, greenyellowblue, font=\footnotesize](wv) {$\mathbf{Q}_1^\top\!(\!\boldsymbol\alpha'\!)\mathbf{W}_{\!v}$};
\begin{scope}
\clip[rounded corners] (wv.south west) rectangle ($(wv.south east)!0.25!(wv.north east)$);
\fill[pattern=north east lines] (wv.south west) rectangle (wv.north east);
\end{scope}
\node[right=of mha, blueredgreen](out) {$\mathbf{W}_{\!o}\mathbf{M}\mathbf{Q}_2$};
\begin{scope}
\clip[rounded corners]  ($(out.south west)!0.75!(out.south east)$)  rectangle (out.north east);
\fill[pattern=north east lines] (out.south west) rectangle (out.north east);
\end{scope}
\begin{scope}[on background layer]
\node[fit={(wk)(wq)(wv)(mha)(out)(linin)}, block, label=right:Attention, inner sep=3mm] (attnmod) {};
\node[fit={(wk)(wq)(wv)}, linear, fill=white] (linin) {};
\end{scope}
\node[right=of wv, nonlinearity, minimum height=2cm, align=center](mha) {Multi-Head\\Attention};

\coordinate (mhafork) at ($(mha.west)!0.4!(linin.east)$);
\draw[arrow] (linin) -- (mha);
\draw[arrow] (mhafork) |- ($(mha.west)!0.5!(mha.south west)$);
\draw[arrow] (mhafork) |- ($(mha.west)!0.5!(mha.north west)$);
\draw[arrow] (mha) -- (out);
\node[below=of attnmod, align=center] (inputs) {Inputs multiplied\\ by $\mathbf{Q}_1$ and truncated};
\node[draw, circle, above=0.6cm of attnmod] (resid1) {$+$};

\node[above=0.7cm of resid1, nonlinearity](rmsnorm) {$\frac{\mathbf x}{\Vert \mathbf x\Vert}$};
\begin{scope}[on background layer]
\node[fit={(rmsnorm)}, block, label=right:RMSNorm] (layernorm){};
\end{scope}

\node[above=4.6cm of wq, greenyellowblue, minimum width=2.7cm] (w1) {$\mathbf{Q}_2^\top(\boldsymbol{\alpha})\mathbf{W}_1$};
\begin{scope}
\clip[rounded corners] (w1.south west) rectangle ($(w1.south east)!0.25!(w1.north east)$);
\fill[pattern=north east lines] (w1.south west) rectangle (w1.north east);
\end{scope}
\node[right=of w1, nonlinearity, align=center] (relu) {Activation\\ Function};
\node[right=of relu, blueredgreen, minimum height=2.7cm] (w2) {$\mathbf{W}_{\!2}\mathbf{M}\mathbf{Q}_3$}; 
\begin{scope}
\clip[rounded corners]  ($(w2.south west)!0.75!(w2.south east)$)  rectangle (w2.north east);
\fill[pattern=north east lines] (w2.south west) rectangle (w2.north east);
\end{scope}
\begin{scope}[on background layer]
\node[fit={(w1)(relu)(w2)}, block, label=right:FFN, inner sep=3mm] (mlp) {};
\end{scope}
\draw[arrow] (w1) -- (relu);
\draw[arrow] (relu) -- (w2);
\coordinate (resid2loc) at ($(resid1.north |- mlp.north) + (0, 0.75cm)$);
\node[draw, circle] (resid2) at (resid2loc) {$+$};

\node[left=of attnmod, linear, fill=myGreen!20] (attnrot) {$\mathbf{Q}_1^\top\!\mathbf{Q}_2$};
\begin{scope}
\clip[rounded corners]  ($(attnrot.south west)!0.75!(attnrot.south east)$)  rectangle (attnrot.north east);
\fill[pattern=north east lines] (attnrot.south west) rectangle (attnrot.north east);
\end{scope}
\begin{scope}
\clip[rounded corners] (attnrot.south west) rectangle ($(attnrot.south east)!0.25!(attnrot.north east)$);
\fill[pattern=north east lines] (attnrot.south west) rectangle (attnrot.north east);
\end{scope}

\node[linear, fill=myGreen!20] (mlprot) at ($(attnrot.north |- mlp.east)$) {$\mathbf{Q}_2^\top\!\mathbf{Q}_3$};
\begin{scope}
\clip[rounded corners]  ($(mlprot.south west)!0.75!(mlprot.south east)$)  rectangle (mlprot.north east);
\fill[pattern=north east lines] (mlprot.south west) rectangle (mlprot.north east);
\end{scope}
\begin{scope}
\clip[rounded corners] (mlprot.south west) rectangle ($(mlprot.south east)!0.25!(mlprot.north east)$);
\fill[pattern=north east lines] (mlprot.south west) rectangle (mlprot.north east);
\end{scope}

\draw[arrow] (inputs) -- (attnmod.south) -| (attnmod.west) -- (wk.west);
\coordinate[left=of attnmod] (attnleft);
\draw[arrow] (inputs) -| (attnrot.south);
\draw[arrow] (attnrot) |- (resid1);
\draw[arrow] (out) -- (attnmod.east) |- (attnmod.north) -- (resid1);
\draw[arrow] (resid1) -- (layernorm.south) -| (layernorm.west) -- (rmsnorm);
\draw[arrow] (rmsnorm) -- (layernorm.east) |- (layernorm.north) -- ($(resid2.south |- mlp.south)$) -| (mlp.west) -- (w1);
\coordinate (mlpleft) at ($(attnleft |- mlp.east)$);
\coordinate (mlpresid) at ($(layernorm.south)!0.5!(resid1.north)$);
\draw[arrow] (mlpresid) -| (mlprot);
\draw[arrow] (mlprot) |- (resid2);
\draw[arrow] (w2) -- (mlp.east) |- ($(resid2.south|- mlp.north)$) -- (resid2);

\end{tikzpicture}}
\end{minipage}\hfill
\begin{minipage}[c]{0.4\textwidth}
\caption{\label{fig:sliced} With the network converted to RMSNorm (see Figure \ref{fig:absorb-layernorm}), we apply the computational-invariance idea. The input weight matrices \myYellowHighlight{$\textrm{diag}(\boldsymbol\alpha)$}\myBlueHighlight{$\mW_\textrm{in}$} are pre-multiplied by \myGreenHighlight{$\mQ^\top$}. The output matrices \myBlueHighlight{$\mW_\textrm{out}$}\myRedHighlight{$\mM$} are post-multiplied by \myGreenHighlight{$\mQ$}. In the skip-connection, a new linear layer is added \myGreenHighlight{$\mQ_{\ell}^\top\mQ_{\ell+1}$}. After these modifications, the matrices can be sliced (hatched areas). }
  \end{minipage}
\end{figure}
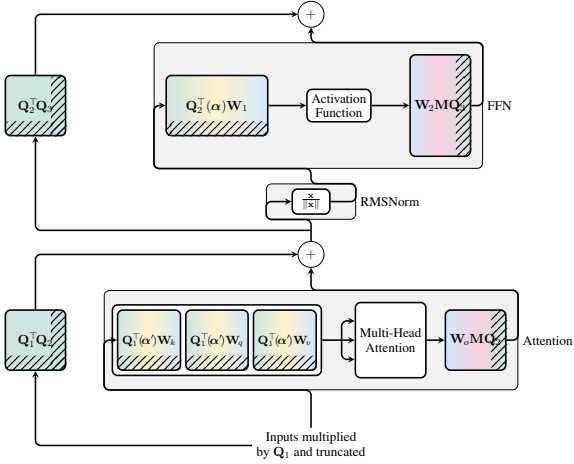

\section{Experimental Validation}
\label{sec:experiments}
\paragraph{Setup} 
We use Hugging Face Transformers \citep{wolf2019huggingface} to implement our code with PyTorch \citep{paszke2019pytorch}.
The computation of $\mQ$ is performed on a single H100 GPU with 80GB of memory, 
taking approximately 3.5 hours to complete for the \llama 70B model. During the PCA calculation, we use double precision for computing the eigenvectors of the covariance matrix.
We find that using single precision for eigenvector calculations in PyTorch leads to a discrepancy in the final accuracy, as detailed in Appendix \ref{appx:fp32_pca}.

We experiment with two different calibration sets: the \wikitext training dataset~\citep{merity2016pointer} and the Alpaca training dataset~\citep{alpaca}. An ablation study on the calibration set size and sequence length is presented in Appendix~\ref{subsec:ablation_study}.

\paragraph{Models, Tasks, and GPUs} 
We evaluate all our experiments on OPT \citep{zhang2022opt}, \llama \citep{touvron2023llama} model families, and additionally evaluate \msphi (in our zero-shot task) experiments. We exclude OPT 175B, as it is outperformed by smaller \llama models. Nonetheless, we anticipate that this larger model will yield improved results, as larger models typically offer more promising opportunities for compression (see Section \ref{subsec:main_results}). We evaluate our scheme on both language generation as well as popular zero-shot tasks. To demonstrate the comprehensive speedup achieved by SliceGPT we use: Quadro RTX6000 GPUs with 24GB of memory as a representative example of consumer-level GPUs; 40GB A100s and 80GB H100s to provide datacenter-level benchmarks.

\paragraph{Baseline Setup} 
We initially planned to compare our results against a scheme that pruned columns (or rows) with the smallest norm but found that this baseline was very poor, with the \wikitext perplexity of the model soaring into the 1000s after pruning just a few columns. Instead, we compare SliceGPT against SparseGPT~\citep{frantar2023sparsegpt} employing a 2:4 sparsity ratio, as this is the only sparsity scheme which achieves speedup \citep{mishra2021accelerating}.

\subsection{Results}\label{subsec:main_results}

\paragraph{Generation Task}
We begin by showcasing our findings using the WikiText-2 dataset. 
In this context, we evaluate the performance of both the OPT and \llama model families across different sizes when using this dataset for slicing. 
Table \ref{tab:wikitext2} shows the perplexity obtained by various slicing levels. 
SliceGPT exhibits superior performance when applied to OPT models compared to \llama models which matches our intuition from the spectrum analysis of those models (see Appendix \ref{appx:spectrum} for our discussion).
The performance of SliceGPT improves as the model size increases.
Comparing SliceGPT with SparseGPT, we see that that SparseGPT 2:4 performs worse than SliceGPT with $25\%$ slicing in all \llama models. For OPT, we see that $30\%$ sliced models beat 2:4 sparsity for all model sizes except 2.7B.

\setlength\tabcolsep{5pt}
\begin{table}[H]
    \centering
    \small
    \caption{OPT and \llama perplexity results on WikiText2. The calibration set size and sequence length are 1024 and 2048, respectively.}
        \vspace{-2mm}
    \begin{tabular}{c|ccccccc|ccc}
        \toprule
        \multirow{2}{*}{\textbf{Method}}&\multicolumn{7}{c|}{\textbf{OPT}}&\multicolumn{3}{c}{\textbf{\llama}}\\
         & 125M & 1.3B & 2.7B & 6.7B & 13B & 30B & 66B & 7B & 13B & 70B \\
        \midrule
        Dense & 27.64 &14.61 &12.46 & 10.85	& 10.12 & 9.56 & 9.33 & 5.47 & 4.88 & 3.32 \\
        \midrule
        SparseGPT 2:4 &45.07 &29.61 & 14.90&13.00 & 11.80& 10.53 &  10.22& 8.69& 7.07& 4.98 \\
        \midrule
        
        SliceGPT ($10\%)$ & 29.34 & 15.10 & 12.75 & 10.92 & 10.27 & 9.65  & 9.43  & 5.89 & 5.21 & 3.69 \\
        SliceGPT ($20\%)$ & 34.26 & 16.43 & 13.73 & 11.48 & 10.66 & 9.87  & 9.57  & 6.64 & 5.81 & 4.25 \\
        SliceGPT ($25\%)$ & 37.74 & 17.46 & 14.56 & 11.90 & 10.94 & 10.04 & 9.68  & 7.24 & 6.30 & 4.60 \\
        SliceGPT ($30\%)$ & 43.98 & 19.09 & 15.83 & 12.51 & 11.33 & 10.27 & 9.85  & 8.12 & 6.99 & 5.05 \\
        \bottomrule
    \end{tabular}
    \vspace{5pt}
    \label{tab:wikitext2}
\end{table}
\setlength\tabcolsep{5pt}

\paragraph{Zero-shot Tasks}
We assess SliceGPT across five well-known zero-shot tasks: PIQA \citep{bisk2020piqa}; WinoGrande \citep{sakaguchi2021winogrande}; HellaSwag \citep{zellers2019hellaswag}; ARC-e and ARC-c~\citep{Clark2018ThinkYH}. Following similar work \citep{frantar2023sparsegpt, dettmers2022llm, frantar2022gptq, dettmers2023spqr}, we use the LM Evaluation Harness \citep{gao2021framework} with default parameters in our evaluations.

Figure \ref{fig:zero_shot_tasks} shows the average scores achieved by the sliced models across these tasks. The top row of the plot shows the mean accuracy when \wikitext is used for calibration, and the bottom row shows the accuracy when Alpaca is used for calibration. We observe a similar pattern to the generation task in the results: the OPT models are more amenable to compression than the \llama models, and the reduction in accuracy is less pronounced in the larger models. Here we also include the \msphi model: we see that sliced versions of the \msphi model are comparable with sliced versions of the \llama 7B model. The largest OPT and \llama models can be compressed very effectively, with just a few percentage points loss when removing 30\% of the 66B OPT model. 

We additionally experiment here with recovery fine-tuning (RFT).  
We apply a small amount of RFT to sliced \llama and \msphi models using LoRA~\citep{hu2021lora}, following the idea from \citet{llm-pruner}. For models sliced with \wikitext we use approximately 1k sequences, for those sliced with the Alpaca dataset we use 5k. For all RFT we use $lora\_r=32$, $lora\_alpha=10$ and sequence length 1024, and use defaults for all other hyperparameters in the Hugging Face PEFT package~\citep{peft}.

Figure \ref{fig:zero_shot_recovery} shows the results. We see a marked difference between RFT on \wikitext and Alpaca datasets, with the Alpaca dataset giving much higher performing models. We attribute this difference to the similarity between Alpaca and the benchmark tasks. For the largest \llama 70B model sliced at 30\%, with RFT on Alpaca we are able to achieve an average accuracy of 74.3\%, compared to 76.6\% on the dense model. The sliced model has approximately 51.6B parameters and considerably improved throughput as we demonstrate later. 

We see that \msphi is not able to recover the drop in accuracy from slicing using only the \wikitext dataset, but using Alpaca we are able to recover several percentage points. 
The average accuracy of \msphi with 25\% slicing and RFT is 65.2\%, compared to 72.2\% with the dense model. The sliced model has approximately 2.2B parameters and retains 90.3\% of the accuracy of the 2.8B model. This shows that even small LMs can benefit from post-training pruning. Tables of accuracies across each task are provided in Appendix \ref{appx:zero_shots}.

\begin{figure}[h]
    \centering
    \begin{tikzpicture}
\small
\vspace{-3mm}
\begin{groupplot}[
    group style={
        group size=3 by 2,
        horizontal sep=0.2cm,
        vertical sep=0.2cm,
        group name=my plots,
    },
    title style={
        yshift=-1.5ex,
    },
    tick style={draw=none},
    width=0.5\textwidth,
    height=4cm,
    ylabel style={align=center},
    grid=major,
    ymin=40, ymax=80,
    xmode=log, log basis x=10,
    xtick=data,
    xlabel shift={-1ex},
    ylabel shift={-1ex},
    legend style={
        at={(1.02,0.5)}, 
        anchor=west,
        draw=none,
    },
]
\nextgroupplot[
    title={OPT Family},
    ylabel={WikiText-2 calib.\\Mean Accuracy},
    xticklabels = {},
    mark size=1.8pt,
]
\addplot[mark=*, mark options = {solid, black}, dashed] coordinates {
    (1.315e9, 53.18)
    (2.651e9, 56.39)
    (6.658e9, 60.7)
    (13e9, 61.79)
    (30e9, 64.4)
    (66e9, 66.16)
};

\addplot[mark=*, myBlue] coordinates {
    (1.315e9, 47.72)
    (2.651e9, 51.57)
    (6.658e9, 56.31)
    (13e9, 60.20)
    (30e9, 62.73)
    (66e9, 65.74)
};

\addplot[mark=*, mark options={fill=myBlue!50}, myBlue] coordinates {
    (1.315e9, 46.34)
    (2.651e9, 49.63)
    (6.658e9, 54.28)
    (13e9, 59.27)
    (30e9, 62.11)
    (66e9, 65.17)
};

\addplot[mark=*,mark options={fill=white}, myBlue] coordinates {
    (1.315e9, 44.99)
    (2.651e9, 47.5)
    (6.658e9, 53)
    (13e9, 57.42)
    (30e9, 61.27)
    (66e9, 64.24)
};

\nextgroupplot[
    title={\llama Family},
    ylabel={},
    yticklabels={},
    xticklabels = {},
    mark size=2.5pt,
    mark=triangle*,
    width=0.35\textwidth,
]
\addplot[mark=triangle*, mark options = {solid, black}, dashed] coordinates {
    (7.058e9, 69.0)
    (13.053e9, 71.8)
    (70e9, 76.6)
};

\addplot[mark=triangle*, myRed] coordinates {
    (7.058e9, 58.2)
    (13.053e9, 63.5)
    (70e9, 72.3)
};

\addplot[mark=triangle*, mark options={fill=myRed!50}, myRed] coordinates {
    (7.058e9, 55.5)
    (13.053e9, 58.9)
    (70e9, 69.8)
};

\addplot[mark=triangle*,mark options={fill=white}, myRed] coordinates {
    (7.058e9, 51.5)
    (13.053e9, 55.2)
    (70e9, 66.1)
};

\nextgroupplot[
    title={\msphi},
    ylabel={},
    yticklabels={},
    xticklabels = {},
    mark size=2.5pt,
    mark=triangle*,
    width=0.2\textwidth,
]
\addplot[mark=square*, mark options = {solid, black}, dashed, mark size=2pt] coordinates {
    (2.78, 72.2)
};
\addlegendentry{Dense}
\addplot[mark=square*, mark options={fill=myGreen}, myGreen, mark size=2pt] coordinates {
    (2.78, 58.2)
};
\addlegendentry{20\% sliced}
\addplot[mark=square*, mark options={fill=myGreen!50}, myGreen, mark size=2pt] coordinates {
    (2.78, 54.5)
};
\addlegendentry{25\% sliced}
\addplot[mark=square*, mark options={fill=white}, myGreen, mark size=2pt] coordinates {
    (2.78, 52.0)
};
\addlegendentry{30\% sliced}

\nextgroupplot[
    ylabel={Alpaca calib.\\Mean Accuracy},
    xlabel={\#params in the original model},
    xticklabels = {1.3B, 2.7B, 6.7B, 13B, 30B, 66B},
    mark size=1.8pt,
]
\addplot[mark=*, mark options = {solid, black}, dashed] coordinates {
    (1.315e9, 53.3)
    (2.651e9, 56.4)
    (6.658e9, 60.8)
    (13e9, 61.9)
    (30e9, 64.5)
    (66e9, 66.2)
};

\addplot[mark=*, myBlue] coordinates {
    (1.315e9, 50.0)
    (2.651e9, 53.7)
    (6.658e9, 58.5)
    (13e9, 60.7)
    (30e9, 63.7)
    (66e9, 65.6)
};

\addplot[mark=*, mark options={fill=myBlue!50}, myBlue] coordinates {
    (1.315e9, 49.3)
    (2.651e9, 52.9)
    (6.658e9, 58.0)
    (13e9, 60.0)
    (30e9, 63.3)
    (66e9, 65.3)
};

\addplot[mark=*,mark options={fill=white}, myBlue] coordinates {
    (1.315e9, 48.3)
    (2.651e9, 51.1)
    (6.658e9, 57.1)
    (13e9, 59.5)
    (30e9, 62.7)
    (66e9, 65.0)
};

\nextgroupplot[
    xlabel={\#params in the original model},
    ylabel={},
    yticklabels={},
    xticklabels = {7B, 13B, 70B},
    mark size=2.5pt,
    mark=triangle*,
    width=0.35\textwidth,
]
\addplot[mark=triangle*, mark options = {solid, black}, dashed] coordinates {
    (7.058e9, 69.0)
    (13.053e9, 71.7)
    (70e9, 76.6)
};

\addplot[mark=triangle*, myRed] coordinates {
    (7.058e9, 63.7)
    (13.053e9, 67.4)
    (70e9, 74.4)
};

\addplot[mark=triangle*, mark options={fill=myRed!50}, myRed] coordinates {
    (7.058e9, 60.9)
    (13.053e9, 65.4)
    (70e9, 73.6)
};

\addplot[mark=triangle*,mark options={fill=white}, myRed] coordinates {
    (7.058e9, 57.9)
    (13.053e9, 62.3)
    (70e9, 71.7)
};

\nextgroupplot[
    ylabel={},
    yticklabels={},
    xticklabels = {2.8B},
    mark size=2.5pt,
    mark=triangle*,
    width=0.2\textwidth,
]
\addplot[mark=square*, mark options = {solid, black}, dashed, mark size=2pt] coordinates {
    (2.78, 72.2)
};

\addplot[mark=square*, mark options={fill=myGreen}, myGreen, mark size=2pt] coordinates {
    (2.78, 64.9)
};

\addplot[mark=square*, mark options={fill=myGreen!50}, myGreen, mark size=2pt] coordinates {
    (2.78, 62.3)
};

\addplot[mark=square*, mark options={fill=white}, myGreen, mark size=2pt] coordinates {
    (2.78, 59.2)
};

\end{groupplot}
\end{tikzpicture}
\vspace{-3mm}
    \caption{Mean zero-shot accuracy on OPT, \llama and \msphi across multiple tasks after slicing with the WikiText-2 (top) and Alpaca (bottom) datasets for calibration.}
    \label{fig:zero_shot_tasks}
\end{figure}
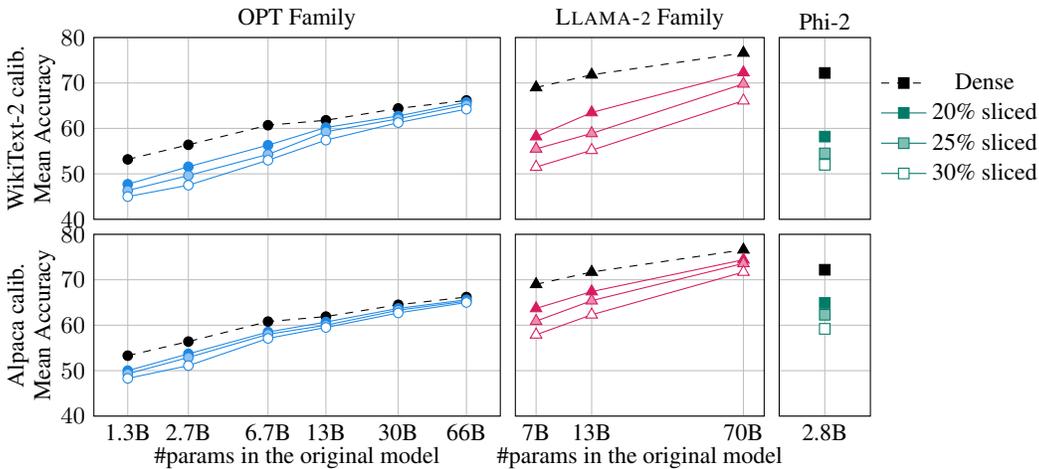

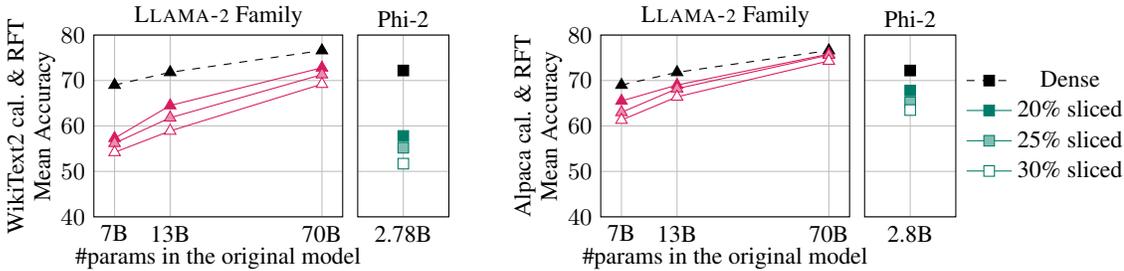
\begin{figure}[h]
    \centering
    \begin{tikzpicture}
\small
\vspace{-3mm}
\begin{groupplot}[
    group style={
        group size=5 by 1,
        horizontal sep=0.2cm,
        group name=my plots,
    },
    title style={
        yshift=-1.5ex,
    },
    tick style={draw=none},
    width=3.2cm,
    height=4cm,
    ylabel style={align=center},
    grid=major,
    ymin=40, ymax=80,
    xmode=log, log basis x=10,
    xtick=data,
    xlabel shift={-1ex},
    ylabel shift={-1ex},
    legend style={
        at={(1.02,0.5)}, 
        anchor=west,
        draw=none,
    },
]

\nextgroupplot[
    xlabel={\#params in the original model},
    title={\llama Family},
    xticklabels = {7B, 13B, 70B},
    mark size=2.5pt,
    mark=triangle*,
    width=0.35\textwidth,
    ylabel={WikiText2 cal. \& RFT\\ Mean Accuracy}
]
\addplot[mark=triangle*, mark options = {solid, black}, dashed] coordinates {
    (7.058e9, 69.0)
    (13.053e9, 71.8)
    (70e9, 76.6)
};

\addplot[mark=triangle*, myRed] coordinates {
    (7.058e9, 57.3)
    (13.053e9, 64.5)
    (70e9, 72.8)
};

\addplot[mark=triangle*, mark options={fill=myRed!50}, myRed] coordinates {
    (7.058e9, 56.2)
    (13.053e9, 61.8)
    (70e9, 71.3)
};

\addplot[mark=triangle*,mark options={fill=white}, myRed] coordinates {
    (7.058e9, 54.2)
    (13.053e9, 58.9)
    (70e9, 69.2)
};

\nextgroupplot[
    title={Phi-2},
    ylabel={},
    yticklabels={},
    xticklabels = {2.78B},
    mark size=2.5pt,
    mark=triangle*,
    width=0.2\textwidth,
]
\addplot[mark=square*, mark options = {solid, black}, dashed, mark size=2pt] coordinates {
    (2.78, 72.2)
};

\addplot[mark=square*, mark options={fill=myGreen}, myGreen, mark size=2pt] coordinates {
    (2.78, 57.8)
};

\addplot[mark=square*, mark options={fill=myGreen!50}, myGreen, mark size=2pt] coordinates {
    (2.78, 55.2)
};

\addplot[mark=square*, mark options={fill=white}, myGreen, mark size=2pt] coordinates {
    (2.78, 51.70)
};
\nextgroupplot[group/empty plot]
\nextgroupplot[
    xlabel={\#params in the original model},
    title={\llama Family},
    xticklabels = {7B, 13B, 70B},
    mark size=2.5pt,
    mark=triangle*,
    width=0.35\textwidth,
    ylabel={Alpaca cal. \& RFT\\ Mean Accuracy}
]
\addplot[mark=triangle*, mark options = {solid, black}, dashed] coordinates {
    (7.058e9, 69.0)
    (13.053e9, 71.8)
    (70e9, 76.6)
};

\addplot[mark=triangle*, myRed] coordinates {
    (7.058e9, 65.5)
    (13.053e9, 69.0)
    (70e9, 75.8)
};

\addplot[mark=triangle*, mark options={fill=myRed!50}, myRed] coordinates {
    (7.058e9, 63.0)
    (13.053e9, 68.2)
    (70e9, 75.6)
};

\addplot[mark=triangle*,mark options={fill=white}, myRed] coordinates {
    (7.058e9, 61.3)
    (13.053e9, 66.4)
    (70e9, 74.3)
};

\nextgroupplot[
    title={Phi-2},
    ylabel={},
    yticklabels={},
    xticklabels = {2.8B},
    mark size=2.5pt,
    mark=triangle*,
    width=0.2\textwidth,
]
\addplot[mark=square*, mark options = {solid, black}, dashed, mark size=2pt] coordinates {
    (2.78, 72.2)
};
\addlegendentry{Dense}
\addplot[mark=square*, mark options={fill=myGreen}, myGreen, mark size=2pt] coordinates {
    (2.78, 67.8)
};
\addlegendentry{20\% sliced}
\addplot[mark=square*, mark options={fill=myGreen!50}, myGreen, mark size=2pt] coordinates {
    (2.78, 65.2) 
};
\addlegendentry{25\% sliced}
\addplot[mark=square*, mark options={fill=white}, myGreen, mark size=2pt] coordinates {
    (2.78, 63.5)
};
\addlegendentry{30\% sliced}
\end{groupplot}
\end{tikzpicture}
\vspace{-3mm}
    \caption{Mean zero-shot accuracy on \llama and \msphi across multiple tasks after slicing and recovery fine-tuning (RFT). Left: WikiText-2 used for calibration and RFT. Right: Alpaca used for calibration and RFT. Despite an extensive search, we were not able to find RFT parameters that enabled improved performance in the OPT models.}
    \label{fig:zero_shot_recovery}
\end{figure}

\paragraph{Benchmarking Throughput}
Unlike conventional sparsity methods, which introduce sparsity in $\mW_\textrm{in}$ and $\mW_\textrm{out}$, SliceGPT also introduces (structured) sparsity in $\mX$: entire columns of $\mX$ are sliced off, reducing the embedding dimension. This enhances both the computational complexity (in flops) and data movement within our compressed model.

The token throughput of models sliced at 25\% and 50\% are compared to the dense model on 80GB H100 GPUs. We set the sequence length to 128 and find the maximum throughput by doubling the batch size until the GPUs run out of memory or the throughput drops off. The 25\% sliced models achieve up to 1.55$\times$ throughput improvement over the dense model. At 50\% slicing the largest models require only one GPU instead of two, with large increases in throughput: 3.13$\times$ and 1.87$\times$. This means that for a fixed number of GPUs, these models achieve 6.26$\times$ and 3.75$\times$ throughput of a dense model. We note that the WikiText2 perplexity of SliceGPT at 50\% is worse than SparseGPT 2:4, but the throughput is much higher than could be achieved with a sparse method that does not slice $\mX$. For models of size 13B, the performance increase from batch-size increasing is less pronounced because the models take up little of the GPU memory. On consumer grade GPUs (with less memory) the throughput for these smaller models is likely to be improved. For full details see Appendix~\ref{appx:memory}.

\paragraph{Inference Time}
Next we study the end-to-end runtime of a model compressed with SliceGPT. Table \ref{tab:speedup_large} compares the time of generating a single token in OPT 66B and \llama 70B models on Quadro RTX6000 and A100 GPUs. We observe a speedup of 16-17\% on RTX6000 GPUs when employing 25\% slicing, and 11-13\% on A100s. We reduce the number of GPUs used in both cases, providing energy and cost savings relative to deployment of the dense model. For \llama 70B, the compute required using RTX6000 GPUs is reduced to 64\%, from 1764 GPUms to 1075 GPUms\footnote{Our Hugging Face-based testing does not enjoy continuous batching or model sharding. This means that in terms of inference time, the dense-model could be improved more than our sliced model in terms of GPUms. Nonetheless, our measurements \emph{do} reflect the energy-usage per token in such a deployment.}. We attribute this improvement to our approach of substituting weight matrices with smaller ones and using dense kernels in our compressed models, which is infeasible with other pruning schemes.

\setlength\tabcolsep{4pt}
\begin{table}[H]
    \centering
    \small
    \caption{Average per-token inference time of SliceGPT when generating sequences of length 128 (with batch size of 1). In each case, we show the time taken in ms, the number of GPUs required and the total compute in GPUms. }
    \vspace{-2mm}
    \begin{tabular}{c|c|c|c|c|c}
        \toprule
        \textbf{GPU Type} & \textbf{Slicing} & \multicolumn{2}{c|}{\textbf{OPT 66B}} & \multicolumn{2}{c}{\textbf{\llama 70B}} \\
        \midrule
        \multirow{2}{*}{A100  (40GB)} & Dense & 114ms on 4 GPUs & 456 GPUms &  125ms on 4 GPUs & 500 GPUms \\
        \cmidrule{2-6} & $25\%$ &  102ms on 3 GPUs & 306 GPUms & 110ms on 3 GPUs & 330 GPUms  \\
        \midrule
        \multirow{2}{*}{Quadro RTX6000} & Dense & 237ms on 6 GPUs & 1422 GPUms &  252ms on 7 GPUs & 1764 GPUms \\
        \cmidrule{2-6}
       (24GB) & $25\%$ &  204ms on 5 GPUs & 1020 GPUms   &  215ms on 5 GPUs & 1075 GPUms   \\
        \bottomrule
    \end{tabular}
    \label{tab:speedup_large}
\end{table}

End-to-end performance gains are not feasible with our baseline SparseGPT 2:4 at the time of writing. Instead, we compare SliceGPT with SparseGPT 2:4 by comparing the relative timing of each operation involved in a transformer layer. We find that SliceGPT (25\%) is competitive with SparseGPT (2:4) in terms of speedup and perplexity for large models. For further details see Appendix~\ref{appx:timing}.   

\paragraph{Compute cost} All \llama, OPT and \msphi models can be sliced on a single GPU in 1 to 3 hours. With recovery fine-tuning we compress all LMs in 1 to 5 hours total, as shown in Table~\ref{tab:cost}.  

\begin{table}[H]
    \centering
    \small
    \caption{Compute cost of slicing 30\% with SliceGPT and performing recovery fine-tuning using the Alpaca dataset. Here we use a calibration set size of 1024 for \llama models and 2048 for \msphi, and calibration sequence length 2048 in all cases.}
    \begin{tabular}{c|c|c|c|c|c}
        \toprule
        \multirow{2}{*}{\textbf{Model}}&\multicolumn{2}{c|}{\textbf{SliceGPT 30\%}}&\multicolumn{2}{c|}{\textbf{Recovery fine-tuning}}&\multirow{2}{*}{\textbf{Total}} \\
        & Time & GPUs & Time & GPUs & \\
        \midrule
        \llama 7B & 0h44m & 1xH100 80GB & 0h23m & 1xH100 80GB & 1h07m \\
        \llama 13B & 1h08m & 1xH100 80GB & 0h44m & 1xH100 80GB & 1h52m \\
        \llama 70B & 3h31m & 1xH100 80GB & 1h35m & 4xH100 80GB & 5h06m \\
        \midrule
        \msphi & 0h49m & 1xV100 32GB & 1h59m & 1xV100 32GB & 2h48m \\
        \bottomrule
    \end{tabular}
    \label{tab:cost}
\end{table}

\section{Conclusion and Future Work}
\label{sec:conclusion}

We've introduced SliceGPT which allows for structured pruning for large language models. We reduce the cost of inference of \llama 70B on 40GB A100 GPUs to 66\% of that of the dense model without any additional code optimization, requiring fewer GPUs (from 4 to 3) while maintaining better held-out perplexity than SparseGPT 2:4. On 24GB RTX6000 GPUs, the cost of inference is reduced to 64\%, requiring 2 fewer GPUs (from 7 to 5). On zero-shot downstream tasks, slicing OPT 66B, \llama 70B and \msphi at 25\% maintains 99\%, 96\% and 87\% of the dense performance. With recovery fine-tuning 25\%-sliced \llama 70B and \msphi increase to 99\% and 90\% respectively.

Opportunities remain to build on our method. Smaller but dense LMs perform better than LMs with 13B parameters or less pruned to similar sizes, though we do not expect this to remain the case for long. Our pruned models have more parameters than those pruned with SparseGPT but our method allows for larger batch sizes to be loaded into GPU memory, and has no overhead for sparsity structure: perhaps a combined method could obtain the best of both. Other methods of computing $\mQ$ could improve the results. To further decrease the inference time and GPU count, complementary methods including quantization \citep{smoothquant, dettmers2022llm, ashkboos2023towards, dettmers2023spqr, frantar2022gptq}, and structural pruning \citep[e.g.][]{ma2023llmpruner} could be used.

 We hope that our observation of computational invariance can help future research in improving the efficiency of deep learning models, and perhaps inspire new theoretical insights.

\section*{Acknowledgements}
\label{sec:acknowledgements}
We thank Dmitry Kats, Pashmina Cameron, Pavel Myshkov and Liana Mikaelyan for their invaluable contributions to the source code. We additionally thank Pashmina Cameron for her helpful feedback when reviewing early versions of the paper.

\bibliography{References}
\bibliographystyle{iclr2024_conference}

\clearpage
\appendix
\section{Appendix}
\label{sec:appendix}

\subsection{Proof of \Eqref{eq:rmsnorm_inv}}\label{appx:proof}
An orthogonal matrix $\mQ$ is a square matrix that satisfies the relation $\mQ^\top\mQ = \mQ\mQ^\top = \mI$. The norm of a vector is the square-root of the sum of squares of the elements: $\Vert\vx\Vert = \sqrt{\sum_i \vx_i^2} = \sqrt{\vx^\top\vx}$. Multiplying a vector by $\mQ$ does not change the norm since $\Vert\mQ\vx\Vert = \sqrt{\vx^\top\mQ^\top\mQ\vx} = \Vert\vx\Vert$. 

The RMSNorm operation divides each row of the input matrix $\mX$ by its norm. By the basic rules of linear algebra, if $\vx$ is a row of $\mX$, then $\mQ^\top\vx$ is the corresponding row of $\mX\mQ$. Applying RMSNorm to $\mX\mQ$, said row will now be equal to $\frac{1}{\Vert\vx\Vert}\mQ^\top\vx$. After RMSnorm, we can multiply by $\mQ^\top$, our row is now equal to $\frac{1}{\Vert\vx\Vert}\mQ\mQ^\top\vx = \frac{1}{\Vert\vx\Vert}\vx$. Thus we have the relation
\begin{equation}
    \textrm{RMSNorm}(\mX\mQ)\mQ^\top = \textrm{RMSNorm}(\mX)\,.
\end{equation}

\subsection{Single Precision Eigenvalue Calculation}\label{appx:fp32_pca}

As previously noted in Section \ref{sec:experiments}, we employ double precision when performing the PCA algorithm. This choice is made in order to mitigate potential numerical errors that may arise during the computation of the orthogonal matrix in SliceGPT. Nevertheless, it is intriguing to investigate the impact of employing lower precision for PCA calculations on the ultimate accuracy.

Table \ref{tab:wikitext2_fp32} shows the perplexity of all our models when we apply FP32 PCA in our scheme. It shows that the accuracy of larger models could be affected by numerical errors during the PCA calculation phase. It should be noted that we use PyTorch (\texttt{torch.linalg}) for calculating the eigenvectors and eigenvalues.

\setlength\tabcolsep{5pt}
\begin{table}[H]
    \centering
    \small
    \caption{OPT and \llama perplexity results on WikiText2 using FP32 PCA calculation. The calibration set size and sequence length are 128 and 2048, respectively.}
    \begin{tabular}{c|ccccccc|ccc}
        \toprule
        \multirow{2}{*}{\textbf{Method}}&\multicolumn{7}{c|}{\textbf{OPT}}&\multicolumn{3}{c}{\textbf{\llama}}\\
         & 125M & 1.3B & 2.7B & 6.7B & 13B & 30B & 66B & 7B & 13B & 70B \\
        \midrule
        Dense & 27.64 &14.61 &12.46 & 10.85	& 10.12 & 9.56 & 9.33 & 5.47 & 4.88 & 3.32 \\
        \midrule
         SparseGPT 2:4 &45.07 &29.61 & 14.90&13.00 & 11.80& 10.53 &  10.22& 8.69& 7.07& 4.98 \\
        \midrule
        
         SliceGPT $10\%$ & 29.48 & 15.15& 12.83&11.05 & 10.28 & 9.68& 9.45 & 6.51 & 5.64 & 4.20 \\
         SliceGPT $20\%$ & 34.12 & 16.51 & 13.87 & 11.64 & 10.73 & 9.94 & 9.80  & 7.30 & 6.07 & 5.82\\
         SliceGPT $25\%$ & 38.25 & 17.67 & 14.78 & 12.14 & 11.08 & 10.15 & 9.81 & 8.52 & 6.65 & 7.01 \\
         SliceGPT $30\%$ & 44.17 & 19.33 & 16.20 & 12.82 & 11.53 & 10.43 & 9.99 & 10.41 & 7.49 & 8.75 \\
        \bottomrule
    \end{tabular}
    \label{tab:wikitext2_fp32}
\end{table}
\setlength\tabcolsep{5pt}

\subsection{Sensitivity to the calibration set size and sequence length}\label{subsec:ablation_study}
We present an ablation study to examine the role of the \wikitext calibration set. We focus on the generation task with 25\% sparsity using OPT 6.7B and \llama 7B models.
\begin{figure*}[h]
    \centering
    \begin{tikzpicture}
\small
\begin{groupplot}[
    group style={
        group size=2 by 1,
        horizontal sep=0.2cm,
    },
    width=0.5\textwidth,
    height=4cm,
    ylabel={WikiText2 PPL},
    grid=major,
    domain=0:10,
    samples=100,
    ymin=6.5, ymax=16, 
    xmode=log, log basis x=2,
    xlabel shift={-1ex},
    ylabel shift={-1ex},
    legend style={
        font=\small,
        at={(1.02,0.5)},
        anchor=west,
        draw=none,
        },
]
\nextgroupplot[xlabel={Calibration set size},
    xtick={16, 32, 64, 128, 256, 512, 1024},
    xticklabels={16, 32, 64, 128, 256, 512, 1024},
    tick style={draw=none},
]
\addplot[mark=*, mark options={fill=myBlue!50}, myBlue,  mark size=1.8pt]coordinates {
    (16, 13.26)
    (32, 12.66)
    (64, 12.35)
    (128, 12.10)
    (256, 12.03)
    (512, 11.98)
    (1024, 11.96)
};

\addplot[mark=triangle*, mark options={fill=myRed!50}, myRed,  mark size=2.5pt] coordinates {
    (16, 9.56)
    (32, 8.56)
    (64, 7.87)
    (128, 7.55)
    (256, 7.34)
    (512, 7.22)
    (1024, 7.19)
};

\nextgroupplot[xlabel={Calibration sequence length}, ylabel={}, yticklabels={},
    tick style={draw=none},
    xtick={128, 256, 512, 1024, 2048, 4096},
    xticklabels={128, 256, 512, 1024, 2048, 4096},
]
\addplot[mark=*, mark options={fill=myBlue!50}, myBlue,  mark size=1.8pt] coordinates {
    (256, 24.54)
    (512, 13.87)
    (1024, 12.35)
    (2048, 12.10)
};
\addlegendentry{OPT 6.7B}

\addplot[mark=triangle*, mark options={fill=myRed!50}, myRed,  mark size=2.5pt] coordinates {
    (128, 10.24)
    (256, 8.71)
    (512, 8.04)
    (1024, 7.75)
    (2048, 7.55)
    (4096, 7.37)
};
\addlegendentry{\llama 7B}
\end{groupplot}
\end{tikzpicture}
    \caption{\label{fig:ablation}The effect of the calibration set size and sequence length on perplexity of WikiText2.} 
\end{figure*}
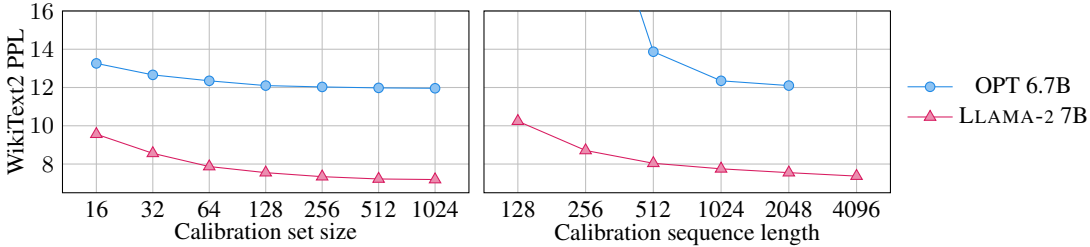

Figure~\ref{fig:ablation} (left) shows the result of varying the size of the calibration set on the perplexity. It shows that sample sizes of at least 128 provide sensible choices for our calibration set.

Next we explore the effect of using different sequence lengths $N$ in the calibration set. 
Given a fixed number of $B$ samples, the PCA input matrix is computed using $NB$ embedding vectors, and understanding the tradeoff between having a larger $B$ or a larger $N$ is interesting.
 Figure~\ref{fig:ablation} (right) shows the results of varying the sequence length in the calibration set from 128 to 4096: we conclude that having a larger sequence length can result in better perplexity.

 Using these insights, we use a calibration set size of 1024 and sequence length 2048 in our main experiments (Table \ref{tab:wikitext2}). In Table \ref{tab:wikitext2_ablation} below we evaluate the perplexity of OPT and \llama models on \wikitext with a smaller calibration set size, which confirms the trend that decreasing this degrades the perplexity across all models and sizes.

\setlength\tabcolsep{5pt}
\begin{table}[H]
    \centering
    \small
    \caption{OPT and \llama perplexity results on WikiText2. The calibration set size and sequence length are 128 and 2048, respectively.}
    \begin{tabular}{c|ccccccc|ccc}
        \toprule
        \multirow{2}{*}{\textbf{Method}}&\multicolumn{7}{c|}{\textbf{OPT}}&\multicolumn{3}{c}{\textbf{\llama}}\\
         & 125M & 1.3B & 2.7B & 6.7B & 13B & 30B & 66B & 7B & 13B & 70B \\
        \midrule
        Dense & 27.64 &14.61 &12.46 & 10.85	& 10.12 & 9.56 & 9.33 & 5.47 & 4.88 & 3.32 \\
        \midrule
        SparseGPT 2:4 &45.07 &29.61 & 14.90&13.00 & 11.80& 10.53 &  10.22& 8.69& 7.07& 4.98 \\
        \midrule
        
        SliceGPT ($10\%)$ & 29.33 & 15.15 & 12.82 & 11.00 & 10.30 & 9.66  & 9.43  & 5.96 & 5.29 & 3.78 \\
        SliceGPT ($20\%)$ & 34.53 & 16.58 & 13.89 & 11.62 & 10.75 & 9.91  & 9.61  & 6.86 & 6.04 & 4.46\\
        SliceGPT ($25\%)$ & 38.13 & 17.78 & 14.84 & 12.12 & 11.08 & 10.10 & 9.76  & 7.56 & 6.61 & 4.89 \\
        SliceGPT ($30\%)$ & 44.61 & 19.61 & 16.30 & 12.81 & 11.55 & 10.32 & 9.95  & 8.64 & 7.44 & 5.42 \\
        \bottomrule
    \end{tabular}
    \vspace{5pt}
    \label{tab:wikitext2_ablation}
\end{table}
\setlength\tabcolsep{5pt}

\subsection{Spectrum Analysis of \llama and OPT Models}\label{appx:spectrum}

The figure below shows the eigenvalue distribution for the OPT 6.7B and \llama 7B models. Although both models have a comparable parameter count, the \llama model has a more tightly compressed distribution in its embeddings spectrum. This observation shows that there are no dominant principal components with significantly more information, making the pruning of these components a more challenging task.

\begin{figure}[h]
        \centering
        \includegraphics[width=0.99\linewidth]{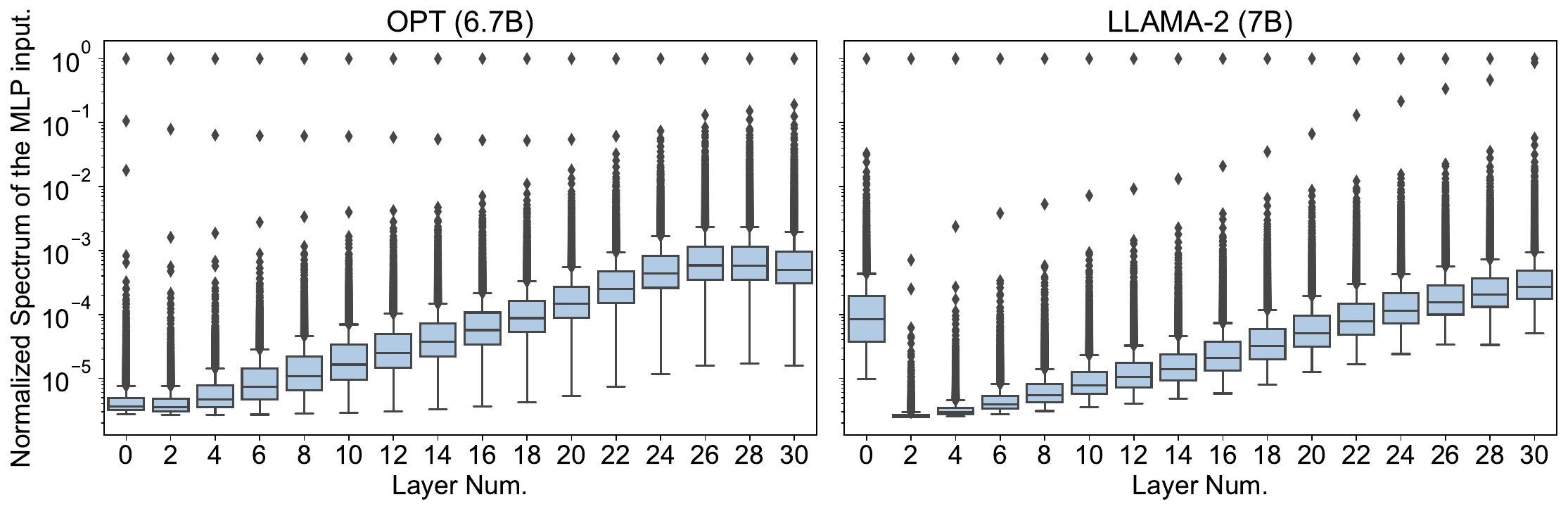}
    \caption{Normalized (by maximum) spectrum of the  MLP inputs (log scale) using 64 samples. Except for the first layer in the \llama model, the eigenvalue distributions for both models show faster decay in early layers compared to later ones. This suggests that a greater amount of slicing could be applied after the orthogonal transformation in these early layers.}
    \label{fig:spectrum}
\end{figure}

We can use the above insights to slice different layers by different amounts. Instead of specifying the slicing level upfront, we set the fraction of the total variance to discard during each PCA calculation, which sets the number of rows and columns to slice off from each matrix. For each model, we run three experiments with varying target variances to obtain a total reduction on the network close to 25\%.

The results are shown in Table~\ref{tab:appx_varying} below. Varying the slicing level by layer improves the \wikitext perplexity in OPT models, but has the opposite effect in \llama models.

\begin{table}[!ht]
    \centering
    \small
    \caption{Evaluating the effects of varying slicing level by layer. The calibration set size is 128 and the sequence length is the maximum for each model.}
    \begin{tabular}{c|c|c|c}
    \toprule
        \multirow{2}{*}{\textbf{Model}}& \textbf{\wikitext PPL}  & \textbf{\wikitext PPL}  &  \multirow{2}{*}{\textbf{Improvement}}  \\ 
        & (25\% constant slicing) & (varying slicing by layer) & \\
        \midrule
        OPT 6.7B & 12.10 & 11.94, 24.7\% total slicing & 0.16 \\
        OPT 13B & 11.04 & 10.76, 24.2\% total slicing & 0.28 \\
        OPT 30B & 10.13 & 9.95, 24.8\% total slicing & 0.18 \\
        OPT 66B & 9.75 & 9.63, 24.1\% total slicing & 0.12 \\
        \midrule
        \llama 7B & 6.84 & 7.63, 24.1\% total slicing & -0.79 \\
        \llama 13B & 6.00 & 6.17, 23.3\% total slicing & -0.17 \\
        \llama 70B & 4.44 & 4.63, 25.5\% total slicing & -0.19 \\ \bottomrule
    \end{tabular}
    \label{tab:appx_varying}
\end{table}

\clearpage
\subsection{Detailed Zero-shot Results}\label{appx:zero_shots}
In this section, we provide the detailed results of the zero-shot tasks we presented in the paper.

\begin{table}[h!]
    \centering
    \small
    \caption{Downstream zero-shot task performance of OPT, \llama and \msphi models when slicing using the WikiText2 dataset.}
    \begin{tabular}{c|c|ccccc|c}
        \toprule
        \textbf{Model} & \textbf{Slicing} & \textbf{PIQA} & \textbf{WinoGrande} & \textbf{HellaSwag} & \textbf{ARC-e} & \textbf{ARC-c} & \textbf{Avg. Score} \\
        \midrule
\multirow{4}{*}{OPT 1.3B} & Dense & 72.42 & 59.27 & 53.72 & 50.97 & 29.52 & 53.18 \\
 & $20\%$ & 65.34 & 54.85 & 45.39 & 46.04 & 26.96 & 47.72 \\
 & $25\%$ & 62.30 & 53.83 & 42.91 & 45.45 & 27.22 & 46.34 \\
 & $30\%$ & 60.77 & 54.70 & 39.81 & 43.90 & 25.77 & 44.99 \\
\midrule
\multirow{4}{*}{OPT 2.7B} & Dense & 74.81 & 61.01 & 60.58 & 54.42 & 31.14 & 56.39 \\
 & $20\%$ & 68.23 & 57.93 & 51.38 & 51.81 & 28.50 & 51.57 \\
 & $25\%$ & 65.29 & 57.22 & 47.85 & 49.79 & 27.99 & 49.63 \\
 & $30\%$ & 62.35 & 57.22 & 44.18 & 46.72 & 27.05 & 47.50 \\
\midrule
\multirow{4}{*}{OPT 6.7B} & Dense & 76.39 & 65.19 & 67.16 & 60.14 & 34.64 & 60.70 \\
 & $20\%$ & 72.74 & 61.09 & 61.04 & 55.89 & 30.80 & 56.31 \\
 & $25\%$ & 70.35 & 60.62 & 58.15 & 52.78 & 29.52 & 54.28 \\
 & $30\%$ & 68.61 & 60.69 & 54.56 & 52.15 & 29.01 & 53.00 \\
\midrule
\multirow{4}{*}{OPT 13B} & Dense & 76.82 & 64.80 & 69.81 & 61.87 & 35.67 & 61.79 \\
 & $20\%$ & 74.48 & 64.96 & 65.42 & 60.90 & 35.24 & 60.20 \\
 & $25\%$ & 73.67 & 64.25 & 63.28 & 60.52 & 34.64 & 59.27 \\
 & $30\%$ & 71.82 & 62.90 & 60.66 & 58.80 & 32.94 & 57.42 \\
\midrule
\multirow{4}{*}{OPT 30B} & Dense & 78.07 & 68.19 & 72.27 & 65.24 & 38.23 & 64.40 \\
 & $20\%$ & 76.50 & 66.61 & 70.61 & 64.18 & 35.75 & 62.73 \\
 & $25\%$ & 75.30 & 66.61 & 69.42 & 63.55 & 35.67 & 62.11 \\
 & $30\%$ & 74.97 & 65.04 & 68.15 & 63.55 & 34.64 & 61.27 \\
\midrule
\multirow{4}{*}{OPT 66B} & Dense & 79.82 & 68.90 & 74.85 & 67.21 & 40.02 & 66.16 \\
 & $20\%$ &78.73 &67.88 &73.79 &68.81 &39.51 & 65.74 \\
 & $25\%$ &78.40 & 67.09 & 73.33 &67.89 &39.16 & 65.17 \\
 & $30\%$ &77.42 & 66.30 &72.62 &66.90 &37.97 & 64.24 \\
\midrule
\multirow{4}{*}{\llama 7B} & Dense & 79.11 & 69.06 & 75.99 & 74.58 & 46.25 & 69.00 \\
 & $20\%$ & 69.42 & 65.11 & 59.04 & 59.76 & 37.54 & 58.18 \\
 & $25\%$ & 66.87 & 63.38 & 54.16 & 58.46 & 34.56 & 55.48 \\
 & $30\%$ & 63.55 & 61.33 & 49.62 & 51.77 & 31.23 & 51.50 \\
\midrule
\multirow{4}{*}{\llama 13B} & Dense & 80.47 & 72.22 & 79.39 & 77.48 & 49.23 & 71.76 \\
 & $20\%$ & 71.87 & 69.38 & 63.04 & 69.87 & 43.09 & 63.45 \\
 & $25\%$ & 68.55 & 67.48 & 58.10 & 62.50 & 37.88 & 58.90 \\
 & $30\%$ & 66.10 & 65.11 & 52.69 & 56.82 & 35.07 & 55.16 \\
\midrule
\multirow{4}{*}{\llama 70B} & Dense & 82.70 & 77.98 & 83.84 & 80.98 & 57.34 & 76.57 \\
 & $20\%$ & 76.61 & 76.40 & 72.98 & 80.51 & 55.20 & 72.34 \\
 & $25\%$ & 74.92 & 75.37 & 68.84 & 77.90 & 51.71 & 69.75 \\
 & $30\%$ & 72.31 & 73.56 & 63.69 & 73.40 & 47.61 & 66.11 \\
\midrule
\multirow{4}{*}{\msphi} & Dense & 79.11 & 75.77 & 73.83 & 78.32 & 54.18 & 72.24 \\
 & $20\%$ & 71.87 & 67.80 & 57.76 & 58.00 & 35.32 & 58.15 \\
 & $25\%$ & 69.21 & 65.35 & 52.40 & 53.70 & 31.66 & 54.46 \\
 & $30\%$ & 65.94 & 63.14 & 47.56 & 53.03 & 30.29 & 51.99 \\
\bottomrule
\end{tabular}
\label{tab:zero_shots_wikitext}
\end{table}

\begin{table}[h!]
    \centering
    \small
    \caption{Downstream zero-shot task performance of OPT, \llama and \msphi models when slicing using the Alpaca dataset.}
    \begin{tabular}{c|c|ccccc|c}
        \toprule
        \textbf{Model} & \textbf{Slicing} & \textbf{PIQA} & \textbf{WinoGrande} & \textbf{HellaSwag} & \textbf{ARC-e} & \textbf{ARC-c} & \textbf{Avg. Score} \\
        \midrule
\multirow{4}{*}{OPT 1.3B} & Dense & 72.42 & 59.27 & 53.72 & 50.97 & 29.52 & 53.18 \\
 & $20\%$ & 69.91 & 55.49 & 47.88 & 49.66 & 27.05 & 50.00 \\
 & $25\%$ & 69.37 & 55.72 & 45.82 & 48.70 & 26.62 & 49.25 \\
 & $30\%$ & 68.55 & 55.33 & 43.92 & 47.26 & 26.45 & 48.30 \\
\midrule
\multirow{4}{*}{OPT 2.7B} & Dense & 74.81 & 61.01 & 60.58 & 54.42 & 31.14 & 56.39 \\
 & $20\%$ & 71.87 & 58.09 & 54.98 & 54.04 & 29.44 & 53.68 \\
 & $25\%$ & 70.95 & 58.09 & 52.62 & 53.03 & 29.61 & 52.86 \\
 & $30\%$ & 69.64 & 56.43 & 49.45 & 51.81 & 28.33 & 51.13 \\
\midrule
\multirow{4}{*}{OPT 6.7B} & Dense & 76.39 & 65.19 & 67.16 & 60.14 & 34.64 & 60.70 \\
 & $20\%$ & 74.54 & 62.67 & 62.84 & 59.18 & 33.36 & 58.52 \\
 & $25\%$ & 73.78 & 62.59 & 60.99 & 59.01 & 33.70 & 58.01 \\
 & $30\%$ & 73.34 & 61.80 & 58.93 & 58.33 & 32.85 & 57.05 \\
\midrule
\multirow{4}{*}{OPT 13B} & Dense & 76.82 & 64.80 & 69.81 & 61.87 & 35.67 & 61.79 \\
 & $20\%$ & 76.01 & 65.19 & 66.15 & 61.57 & 34.73 & 60.73 \\
 & $25\%$ & 74.65 & 64.64 & 65.02 & 60.65 & 35.07 & 60.00 \\
 & $30\%$ & 74.86 & 63.46 & 63.16 & 61.36 & 34.56 & 59.48 \\
\midrule
\multirow{4}{*}{OPT 30B} & Dense & 78.07 & 68.19 & 72.27 & 65.24 & 38.23 & 64.40 \\
 & $20\%$ & 78.35 & 66.61 & 70.64 & 65.19 & 37.46 & 63.65 \\
 & $25\%$ & 77.48 & 65.82 & 69.58 & 65.91 & 37.63 & 63.28 \\
 & $30\%$ & 76.93 & 64.96 & 68.66 & 65.70 & 37.12 & 62.67 \\
\midrule
\multirow{4}{*}{OPT 66B} & Dense & 79.82 & 68.90 & 74.85 & 67.21 & 40.02 & 66.16 \\
 & $20\%$ & 79.49 & 68.19 & 73.69 & 67.26 & 39.25 & 65.58 \\
 & $25\%$ & 79.11 & 68.35 & 73.30 & 67.00 & 38.74 & 65.30 \\
 & $30\%$ & 79.05 & 68.75 & 72.62 & 66.29 & 38.31 & 65.00 \\
\midrule
\multirow{4}{*}{\llama 7B} & Dense & 79.11 & 69.06 & 75.99 & 74.58 & 46.25 & 69.00 \\
 & $20\%$ & 76.50 & 65.51 & 65.20 & 69.99 & 41.21 & 63.68 \\
 & $25\%$ & 74.21 & 64.01 & 60.55 & 66.88 & 38.91 & 60.91 \\
 & $30\%$ & 72.25 & 59.83 & 55.86 & 63.93 & 37.80 & 57.93 \\
\midrule
\multirow{4}{*}{\llama 13B} & Dense & 80.47 & 72.22 & 79.39 & 77.48 & 49.23 & 71.76 \\
 & $20\%$ & 77.97 & 68.90 & 69.64 & 74.71 & 45.99 & 67.44 \\
 & $25\%$ & 76.88 & 67.40 & 65.85 & 72.52 & 44.54 & 65.44 \\
 & $30\%$ & 74.10 & 65.82 & 60.91 & 68.43 & 42.41 & 62.34 \\
\midrule
\multirow{4}{*}{\llama 70B} & Dense & 82.70 & 77.98 & 83.84 & 80.98 & 57.34 & 76.57 \\
 & $20\%$ & 81.99 & 76.87 & 78.93 & 80.26 & 54.10 & 74.43 \\
 & $25\%$ & 80.69 & 77.98 & 76.97 & 79.67 & 52.65 & 73.59 \\
 & $30\%$ & 79.33 & 77.27 & 73.11 & 77.44 & 51.19 & 71.67 \\
\midrule
\multirow{4}{*}{\msphi} & Dense & 79.11 & 75.77 & 73.83 & 78.32 & 54.18 & 72.24 \\
 & $20\%$ & 76.17 & 68.75 & 61.95 & 72.18 & 45.48 & 64.90 \\
 & $25\%$ & 75.68 & 64.88 & 58.19 & 70.41 & 43.43 & 62.52 \\
 & $30\%$ & 74.05 & 62.12 & 53.31 & 67.26 & 39.42 & 63.47 \\ 
\bottomrule
\end{tabular}
\label{tab:zero_shots_alpaca}
\end{table}
\begin{table}[h!]
\centering
\small
\caption{Downstream zero-shot task performance of \llama and \msphi models when slicing and recovery fine-tuning using the WikiText2 dataset.}
\begin{tabular}{c|c|ccccc|c}
\toprule
\textbf{Model} & \textbf{Slicing} & \textbf{PIQA} & \textbf{WinoGrande} & \textbf{HellaSwag} & \textbf{ARC-e} & \textbf{ARC-c} & \textbf{Avg. Score} \\
\midrule
\multirow{4}{*}{\llama 7B} & Dense & 79.11 & 69.06 & 75.99 & 74.58 & 46.25 & 69.00 \\
 & $20\%$ & 69.86 & 64.72 & 61.07 & 54.25 & 36.43 & 57.27 \\
 & $25\%$ & 69.26 & 64.96 & 58.65 & 52.36 & 35.75 & 56.20 \\
 & $30\%$ & 67.41 & 63.22 & 55.65 & 50.76 & 34.13 & 54.23 \\
\midrule
\multirow{4}{*}{\llama 13B} & Dense & 80.47 & 72.22 & 79.39 & 77.48 & 49.23 & 71.76 \\
 & $20\%$ & 74.10 & 68.51 & 66.94 & 70.54 & 43.77 & 64.77 \\
 & $25\%$ & 71.27 & 68.98 & 64.12 & 63.76 & 40.87 & 61.80 \\
 & $30\%$ & 69.64 & 66.85 & 59.93 & 59.55 & 38.65 & 58.93 \\
\midrule
\multirow{4}{*}{\llama 70B} & Dense & 82.70 & 77.98 & 83.84 & 80.98 & 57.34 & 76.57 \\
 & $20\%$ & 77.86 & 76.16 & 72.91 & 81.27 & 55.89 & 72.82 \\
 & $25\%$ & 76.71 & 73.72 & 71.41 & 79.88 & 54.69 & 71.28 \\
 & $30\%$ & 75.14 & 73.56 & 69.91 & 74.79 & 51.71 & 69.02 \\
\midrule
\multirow{4}{*}{\msphi} & Dense & 79.11 & 75.77 & 73.83 & 78.32 & 54.18 & 72.24 \\
 & $20\%$ & 71.27 & 67.17 & 54.86 & 56.61 & 38.91 & 57.76 \\
 & $25\%$ & 69.91 & 65.19 & 52.48 & 52.78 & 35.49 & 55.17 \\
 & $30\%$ & 66.16 & 63.54 & 49.72 & 46.38 & 32.68 & 51.70 \\
\bottomrule
\end{tabular}
\label{tab:zero_shot_wikitext_recovery}
\end{table}

\begin{table}[h!]
    \centering
    \small
    \caption{Downstream zero-shot task performance of \llama and \msphi models when slicing and recovery fine-tuning using the Alpaca dataset.}
    \begin{tabular}{c|c|ccccc|c}
        \toprule
        \textbf{Model} & \textbf{Slicing} & \textbf{PIQA} & \textbf{WinoGrande} & \textbf{HellaSwag} & \textbf{ARC-e} & \textbf{ARC-c} & \textbf{Avg. Score} \\
        \midrule
        \multirow{4}{*}{\llama 7B} & Dense & 79.11 & 69.06 & 75.99 & 74.58 & 46.25 & 69.00 \\
         & $20\%$ & 76.55 & 65.59 & 68.26 & 71.84 & 45.05 & 65.46 \\
         & $25\%$ & 75.79 & 63.22 & 65.12 & 68.22 & 42.83 & 63.04 \\
         & $30\%$ & 74.59 & 61.64 & 63.06 & 66.54 & 40.87 & 61.34 \\
        \midrule
        \multirow{4}{*}{\llama 13B} & Dense & 80.47 & 72.22 & 79.39 & 77.48 & 49.23 & 71.76 \\
         & $20\%$ & 79.27 & 68.27 & 73.21 & 74.37 & 49.83 & 68.99 \\
         & $25\%$ & 78.84 & 67.64 & 71.21 & 73.57 & 49.66 & 68.18 \\
         & $30\%$ & 76.11 & 68.03 & 68.58 & 71.42 & 47.10 & 66.35 \\
        \midrule
        \multirow{4}{*}{\llama 70B} & Dense & 82.70 & 77.98 & 83.84 & 80.98 & 57.34 & 76.57 \\
         & $20\%$ & 81.94 & 77.74 & 79.39 & 81.57 & 58.45 & 75.82 \\
         & $25\%$ & 81.88 & 77.11 & 79.04 & 81.36 & 58.70 & 75.62 \\
         & $30\%$ & 80.30 & 75.85 & 77.13 & 80.05 & 58.19 & 74.30 \\ 
        \midrule
        \multirow{4}{*}{\msphi} & Dense & 79.11 & 75.77 & 73.83 & 78.32 & 54.18 & 72.24 \\
         & $20\%$ & 77.42 & 72.14 & 65.33 & 74.20 & 49.91 & 67.80 \\
         & $25\%$ & 76.17 & 68.75 & 63.39 & 70.45 & 47.44 & 65.24 \\
         & $30\%$ & 75.24 & 65.59 & 60.10 & 70.16 & 46.25 & 63.47 \\ 
        \bottomrule
    \end{tabular}
    \label{tab:zero_shot_alpaca_recovery}
\end{table}

\clearpage
\subsection{Benchmarking Throughput Experiment}\label{appx:memory}

\begin{table}[!ht]
    \centering
    \small
    \caption{Benchmarking throughput for OPT and \llama models on 80GB H100 GPUs. We set the sequence length to 128 and find the maximum throughput by doubling the batch size until the GPUs run out of memory or the throughput drops off.}
    \begin{tabular}{c|c|ccp{2cm}}
    \toprule
        \textbf{Model} & \textbf{Slicing} & \textbf{GPUs} & \textbf{Batchsize} & \textbf{Tokens/s} \\ 
        \midrule
        \multirow{3}{*}{OPT 13B} & Dense & 1 & 512 & 2518 \\
         & 25\%  & 1 & 512 & 2846 (1.13$\times$) \\
         & 50\%  & 1 & 512 & 3071 (1.22$\times$) \\ 
        \midrule
        \multirow{3}{*}{OPT 66B} & Dense & 2 & 16 & 141  \\
         & 25\% & 2 & 16 & 152 \ \ (1.08$\times$) \\
         & 50\%  & 1 & 32 & 441 \ \  (6.26$\times$) \\
         \midrule
        \multirow{3}{*}{\llama 13B} & Dense  & 1 & 512 & 2707 \\
         & 25\% & 1 & 512 & 2878 (1.06$\times$) \\
         & 50\% & 1 & 512 & 3122 (1.15$\times$) \\
        \midrule
        \multirow{3}{*}{\llama 70B} & Dense & 2 & 128 & 541\\
         & 25\% & 2 & 256 & 839 \ \  (1.55$\times$) \\
         & 50\% & 1 & 128 & 1014 (3.75$\times$) \\
        \bottomrule
    \end{tabular}
    \label{tab:appx_thpt}
\end{table}

\subsection{Benchmarking Inference Time of SliceGPT against SparseGPT}\label{appx:timing}
We use the CuSparseLT 0.5 library to run sparse matrix multiplications on an 80 GB A100 GPU, replicating the size of the matrix-matrix multiplications in three different-sized \llama models. We used PyTorch to run similar matrix multiplications for the dense equivalent, and for SliceGPT (which is also straightforward dense matmul, but smaller). We chose a sequence length of 2048, and took the matrix sizes from the HuggingFace config files. We took the median runtime over $10^3$ attempts.

Each \llama layer requires a gated FFN with one up projection, one down projection, and a gated projection. In attention, the architecture of the model means that the query matrix multiplication is a different size to the key and value matrix multiplications. The following table shows the time taken in ms to run each matrix multiplication in the model, plus a “total” time and a relative speedup.

\begin{table}[!ht]
    \centering
    \small
    \caption{Results of timing the matrix multiplications involved in each layer of \llama models. For larger models, SliceGPT (25\%) gives the same speedup as SparseGPT 2:4 but with better \wikitext perplexity. For smaller models SparseGPT 2:4 provides better speedup albeit at worse perplexity. Slicing at 50\% trades off perplexity for even greater speedups.}
    \begin{tabular}{c|c|c|ccccc|c}
    \toprule
        \multirow{2}{*}{\textbf{Model}} & \multirow{2}{*}{\textbf{Method}} & \multirow{2}{*}{\textbf{PPL}} & \multicolumn{5}{c|}{\textbf{Operation (ms)}} & \textbf{Total in ms} \\ 
        & & & Down Proj & Up/Gate Proj & K,V& Q & Out & \textbf{(speedup)} \\ \midrule
        \multirow{4}{*}{\llama 7B} & Dense & 5.47 & 0.89 & 0.87 & 0.34 & 0.34 & 0.34 & 3.99  \\
         & SparseGPT 2:4 & 8.69 & 0.56 & 0.61 & 0.23 & 0.23 & 0.23 & 2.70 (1.48$\times$) \\
         & SliceGPT (25\%) & 7.24 & 0.67 & 0.64 & 0.26 & 0.25 & 0.27 & 2.99 (1.33$\times$) \\
         & SliceGPT (50\%) & 17.17 & 0.46 & 0.44 & 0.18 & 0.18 & 0.18 & 2.06 (1.94$\times$) \\
        \midrule
       \multirow{4}{*}{\llama 13B} & Dense & 4.88 & 1.29 & 1.28 & 0.52 & 0.52 & 0.52 & 5.93 \\
        & SparseGPT 2:4 & 7.07 & 0.81 & 0.95 & 0.31 & 0.31 & 0.31 & 3.95 (1.50$\times$) \\
        & SliceGPT (25\%) & 6.30 & 1.03 & 0.98 & 0.39 & 0.39 & 0.41 & 4.57 (1.30$\times$) \\
        & SliceGPT (50\%) & 13.71 & 0.68 & 0.67 & 0.26 & 0.27 & 0.30 & 3.11 (1.91$\times$) \\
        \midrule
        \multirow{4}{*}{\llama 70B} & Dense & 3.32 & 4.63 & 4.27 & 0.21 & 1.27 & 1.27 & 16.13 \\
        & SparseGPT 2:4 & 4.98 & 2.87 & 3.69 & 0.14 & 0.84 & 0.83 & 12.20 (1.32$\times$) \\
        & SliceGPT (25\%) & 4.60 & 3.4 & 3.26 & 0.16 & 0.96 & 1.00 & 12.20 (1.32$\times$) \\
        & SliceGPT (50\%) & 8.86 & 2.28 & 2.34 & 0.15 & 0.69 & 0.68 & 8.63 (1.87$\times$) \\
        
        \bottomrule
    \end{tabular}
    \label{tab:llama_bench}
\end{table}

We also benchmarked the OPT architecture in the same way. In this case, the matrix multiplications associated with Key, Value, Query and Out are all the same size, and there are just two matrix multiplications in the MLP section (FC1 and FC2).

\begin{table}[!ht]
    \centering
    \small
    \caption{Results of timing the matrix multiplications involved in each layer of OPT models. For larger models, SliceGPT (25\%) gives slightly better speedup than SparseGPT 2:4, and with better \wikitext perplexity. For smaller models SparseGPT 2:4 provides better speedup albeit at worse perplexity. Slicing at 50\% trades off perplexity for even greater speedups.}
    \begin{tabular}{c|c|c|ccc|c}
    \toprule
        \multirow{2}{*}{\textbf{Model}} & \multirow{2}{*}{\textbf{Method}} & \multirow{2}{*}{\textbf{PPL}} & \multicolumn{3}{c|}{\textbf{Operation (ms)}} & \textbf{Total in ms} \\ 
        & & & FC2 & FC1 & K,V,Q,Out & \textbf{(speedup)}\\ \midrule
        \multirow{4}{*}{OPT 13B} & Dense & 10.12 & 1.89 & 1.89 & 0.52 & 7.75 \\
         & SparseGPT 2:4 & 11.80 & 1.18 & 1.50 & 0.31 & 5.42 (1.43$\times$) \\
         & SliceGPT (25\%) & 10.94 & 1.50 & 1.45 & 0.38 & 5.92 (1.31$\times$) \\
         & SliceGPT (50\%) & 15.39 & 0.96 & 0.99 & 0.26 & 3.98 (1.95$\times$) \\
        \midrule
        \multirow{4}{*}{OPT 30B} & Dense & 9.56 & 10.29 & 1.28 & 0.52 & 5.93 \\
         & SparseGPT 2:4 & 10.53 & 0.81 & 0.95 & 0.31 & 3.95 (1.50$\times$) \\
         & SliceGPT (25\%) & 10.04 & 1.03 & 0.98 & 0.39 & 4.55 (1.30$\times$) \\
         & SliceGPT (50\%) & 12.47 & 0.68 & 0.67 & 0.26 & 3.06 (1.94$\times$) \\
        \midrule
        \multirow{4}{*}{OPT 66B} & Dense & 9.33 & 4.63 & 4.27 & 0.21 & 14.01 \\
         & SparseGPT 2:4 & 10.22 & 2.87 & 3.69 & 0.14 & 10.81 (1.30$\times$) \\
         & SliceGPT (25\%) & 9.68 & 3.40 & 3.26 & 0.16 & 10.56 (1.33$\times$) \\
         & SliceGPT (50\%) & 11.39 & 2.28 & 2.34 & 0.15 & 7.56 (1.85$\times$) \\
        \bottomrule
    \end{tabular}
    \label{tab:opt_bench}
\end{table}

\end{document}